%% file: main_tie.tex
\documentclass[journal]{IEEEtranTIE}
\usepackage{graphicx}
\usepackage{cite}
\usepackage{picinpar}
\usepackage{amsmath}
\usepackage{amsthm} 
\usepackage{url}
\usepackage{flushend}
\usepackage[utf8]{inputenc} 
\usepackage{colortbl}
\usepackage{soul}
\usepackage{multirow}
\usepackage{pifont}
\usepackage{color}
\usepackage{alltt}
\usepackage[hidelinks]{hyperref}
\usepackage{enumerate}
\usepackage{siunitx}
\usepackage{breakurl}
\usepackage{epstopdf}
\usepackage{pbox}

\usepackage[english]{babel}

\usepackage{subfig}
\usepackage{booktabs}
\usepackage{tabularx} 
\usepackage{utils/symbols}
\usepackage{tikz}
\usepackage{amssymb}
\usepackage{algorithm}
\usepackage{algorithmicx}
\usepackage{algpseudocode}
\usepackage{mathtools}
\usepackage{caption}
\usepackage{placeins}

\usepackage{soul} 
\usepackage{marginnote} 

\newif\ifreviseenabled
\reviseenabledfalse

\newcommand{\revise}[2]{%
    \ifreviseenabled
        \textcolor{red}{#2}
        \protect\marginnote{\color{red}\small C#1} 
    \else
        #2 
    \fi
}

\usepackage{xcolor}
\usepackage{tcolorbox}
\tcbuselibrary{breakable} 

\ifreviseenabled
    \newtcolorbox{revisebox}[2][]{
        breakable, 
        colback=blue!5!white, 
        colframe=blue!50!black, 
        title={\textbf{Revisions to Comment~#2}}, 
        fonttitle=\small,
        left=0mm,right=0mm,top=1mm,bottom=1mm, 
        #1 
    }
    \newtcolorbox{revisebox_nobreak}[2][]{
        colback=blue!5!white, 
        colframe=blue!50!black, 
        title={\textbf{Revisions to Comment~#2}}, 
        fonttitle=\small,
        left=0mm,right=0mm,top=1mm,bottom=1mm, 
        #1 
    }
\else
    \NewDocumentEnvironment{revisebox}{O{} m}{%
    }{%
    }
    \NewDocumentEnvironment{revisebox_nobreak}{O{} m}{%
    }{%
    }
\fi

\newtheorem{assumption}{Assumption}
\newtheorem{definition}{Definition}
\newtheorem{lemma}{Lemma}
\newtheorem{remark}{Remark}
\newtheorem{theorem}{Theorem}

\newtheorem{problem}{Problem}

\begin{document}
\title{KCFRC: Kinematic Collision-Aware Foothold \\ Reachability Criteria for Legged Locomotion}

\author{Lei Ye, Haibo Gao, Huaiguang Yang, Peng Xu$^{*}$, Haoyu Wang, Tie Liu, Junqi Shan,\\Zongquan Deng and Liang Ding$^{*}$
    \thanks{The authors are with the State Key Laboratory of Robotics and Systems, Harbin Institute of Technology, Harbin 150001, China.}
    \thanks{*Corresponding author: Peng Xu (pengxu\_hit@163.com), Liang Ding (liangding@hit.edu.cn).}
    \thanks{This work was supported by the National Key R\&D Program of China (2022YFB4702300);
        the National Natural Science Foundation of China (Grant No. 91948202);
        the Fundamental Research Funds for the Central Universities (FRFCU9803500621, No. HIT. OCEF. 2023042);
        the Heilongjiang Postdoctoral Fund under Grant LBH-Z20136.}
}

\maketitle

\begin{abstract}

    Legged robots face significant challenges in navigating complex environments, as they require precise real-time decisions for foothold selection and contact planning.
    While existing research has explored methods to select footholds based on terrain geometry or kinematics, a critical gap remains: \revise{2.4}{few existing methods efficiently validate} the existence of a non-collision swing trajectory. This paper addresses this gap by introducing KCFRC, a novel approach for efficient foothold reachability analysis.
    We first formally define the foothold reachability problem and establish a sufficient condition for foothold reachability. Based on this condition, we develop the KCFRC algorithm, which enables robots to validate foothold reachability in real time.
    Our experimental results demonstrate that KCFRC achieves remarkable time efficiency, completing foothold reachability checks for a single leg across 900 potential footholds in an average of 2 ms. Furthermore, we show that KCFRC can accelerate trajectory optimization and is particularly beneficial for contact planning in confined spaces, enhancing the adaptability and robustness of legged robots in challenging environments.

\end{abstract}

\begin{IEEEkeywords}
    Legged robots, foothold planning, reachability analysis, collision avoidance, trajectory optimization, kinematic constraints.
\end{IEEEkeywords}

\markboth{IEEE TRANSACTIONS ON INDUSTRIAL ELECTRONICS}%
{}

\input{sections/1_introduction}
\input{sections/2_related_work}
\input{sections/3_reachability_prob}

\input{sections/4_kcfrc_check}
\input{sections/5_experiments}

\input{sections/6_applications}

\input{sections/7_conclusion}

\bibliographystyle{Bibliography/IEEEtranTIE}
\bibliography{library}

\vspace{-1.7cm}

\begin{IEEEbiography}[{\includegraphics[width=1in,height=1.25in,clip,keepaspectratio]{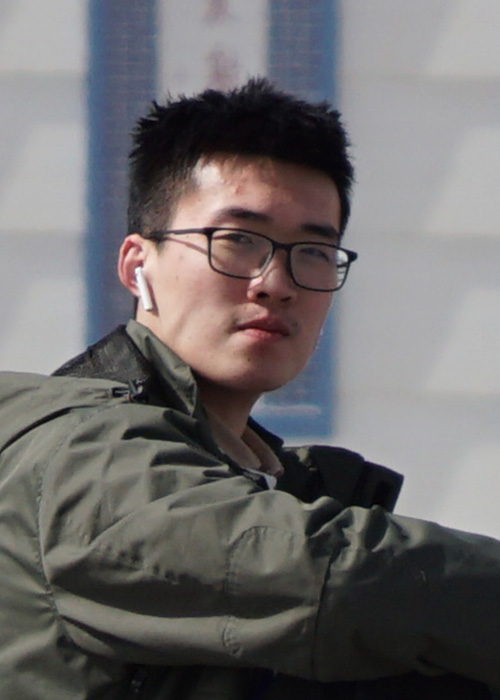}}]
    {Lei Ye} received his Bachelor degree in Automation from Harbin Institute of Technology, Harbin, China, in 2024. He is a Master student in the Department of Aerospace Science and Technology at Harbin Institute of Technology. His research interests include legged robot planning, colony planning, and motion control.
\end{IEEEbiography}
\vspace{-1.7cm}

\begin{IEEEbiography}[{\includegraphics[width=1in,height=1.25in,clip,keepaspectratio]{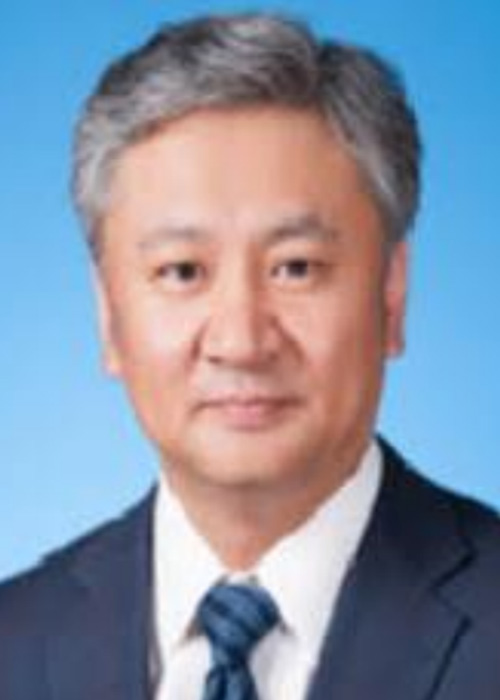}}]
    {Haibo Gao} was born in 1970. He received the Ph.D. degree in mechanical design and theory from the Harbin Institute of Technology, Harbin, China, in 2004. He is currently a Professor with the State Key Laboratory of Robotics and System, Harbin Institute of Technology. His current research interests include specialized and aerospace
    robotics and mechanism.
\end{IEEEbiography}

\begin{IEEEbiography}[{\includegraphics[width=1in,height=1.25in,clip,keepaspectratio]{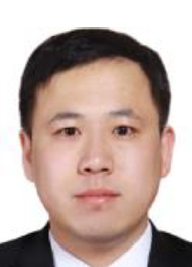}}]
    {Huaiguang Yang} was born in 1991. He received the Ph.D. degree in the manufacturing engineering of aerospace vehicle from the Harbin Institute of Technology, Harbin, China, in 2020.
    He is currently an associate Professor with the State Key
    Laboratory of Robotics and System, Harbin Institute of Technology. His current research interests include terramechanics and dynamic simulation for planetary rovers.
\end{IEEEbiography}
\vspace{-1.6cm}

\begin{IEEEbiography}[{\includegraphics[width=1in,height=1.25in,clip,keepaspectratio]{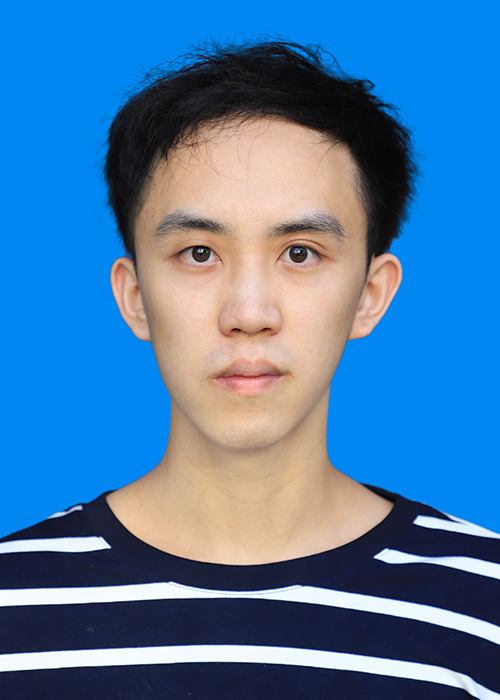}}]
    {Peng Xu} received the Master degree in mechanical
    engineering from the Harbin Institute of Technology, Harbin, China, in
    2020.
    He is currently a Ph.D. student at the State Key Laboratory of Robotics and Systems, Harbin Institute of Technology. His current research interests include motion planning, unsupervised perception, and optimal control of legged robots.
\end{IEEEbiography}
\vspace{-1.6cm}

\begin{IEEEbiography}[{\includegraphics[width=1in,height=1.25in,clip,keepaspectratio]{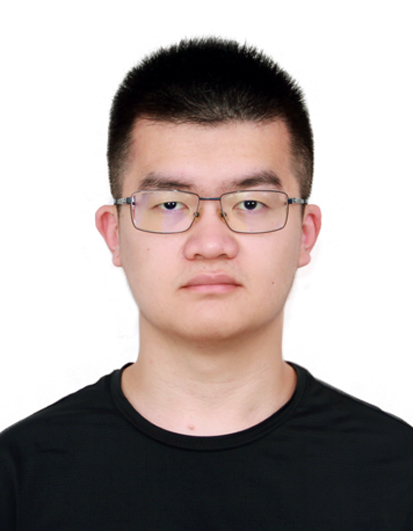}}]
    {Haoyu Wang} received the Bachelor's degree in Automation from Harbin Institute of Technology, Harbin, China, in 2023.
    He is currently pursuing a Master's degree in Mechanical Engineering at Harbin Institute of Technology, Harbin, China. His current research interests include optimal control of legged robots.
\end{IEEEbiography}
\vspace{-1.6cm}

\begin{IEEEbiography}[{\includegraphics[width=1in,height=1.25in,clip,keepaspectratio]{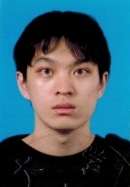}}]
    {Tie Liu} received the master's degree in mechanical
    engineering in 2023 from the Harbin
    Institute of Technology, Harbin, China, where he
    is currently working toward the Ph.D. degree in
    the manufacturing engineering of aerospace vehicle
    with the State Key Laboratory of Robotics
    and Systems.
    His research focuses on the design, modeling, and simulation of Hexapod robots.
\end{IEEEbiography}
\vspace{-1.6cm}

\begin{IEEEbiography}[{\includegraphics[width=1in,height=1.25in,clip,keepaspectratio]{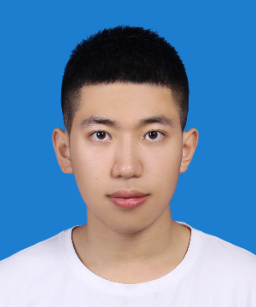}}]
    {Junqi Shan} is currently pursuing his Master's degree at the State Key Laboratory of Robotics and Systems (Harbin Institute of Technology). His research focuses on intelligent path planning, optimization algorithms, and multimodal learning for robotic systems. For detailed research updates and project demonstrations, please visit his academic portfolio at \url{https://tipriest.github.io}.
\end{IEEEbiography}
\vspace{-1.6cm}

\begin{IEEEbiography}[{\includegraphics[width=1in,height=1.25in,clip,keepaspectratio]{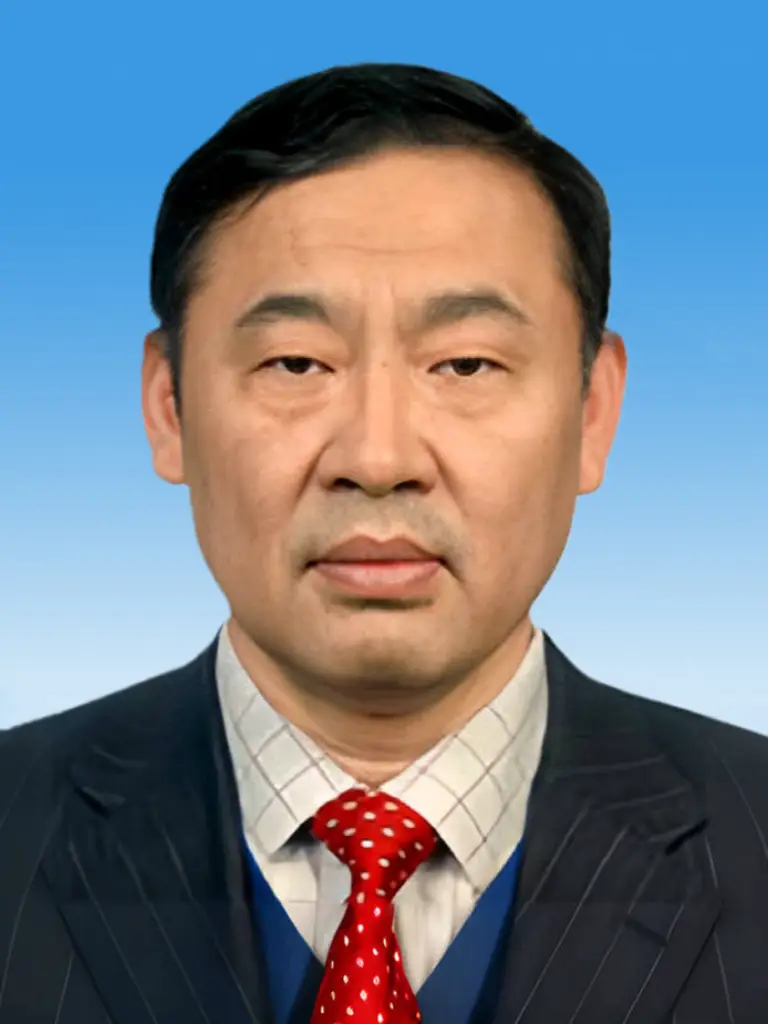}}]
    {Zongquan Deng} was born in 1956. He received the master's degree from the Harbin Institute of Technology, Harbin, China, in 1984. He is currently a Professor and an Academician of the Chinese Academy of Engineering in the Harbin Institute of Technology. His current research interests include special robot systems, and aerospace mechanisms and control.
\end{IEEEbiography}
\vspace{-1.6cm}

\begin{IEEEbiography}[{\includegraphics[width=1in,height=1.25in,clip,keepaspectratio]{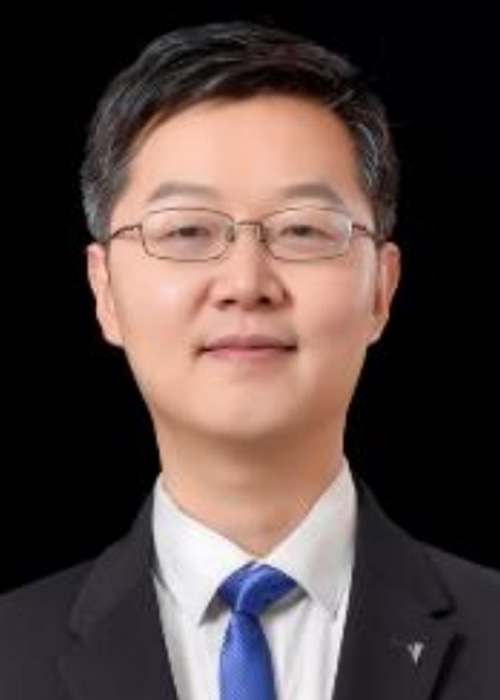}}]
    {Liang Ding}
    (Senior Member, IEEE) was born in 1980. He received the Ph.D. degree in mechanical engineering from the Harbin Institute of Technology. He is currently a Professor with the State Key Laboratory of Robotics and System, Harbin Institute of Technology. His current research interests include terramechanics, intelligent control, design and simulation of robotic systems, in particular for planetary exploration rovers and multi-legged robots.
\end{IEEEbiography}
\vspace{-1.6cm}

\end{document}

%% file: sections/1_introduction.tex
\section{INTRODUCTION}

\IEEEPARstart{L}{egged} robots have emerged as pivotal mobile platforms for executing complex tasks in unstructured environments. Navigating confined spaces necessitates meticulous motion planning and foothold selection to prevent limb collisions during leg swings. This poses a fundamental challenge: how to guarantee kinematically feasible, collision-free transitions between consecutively planned footholds, particularly when joint space limitations and dynamic body movements constrain the swing leg's operational space. Formally, we define the \textbf{Foothold Reachability Problem (FRP)} as verifying the existence of a collision-free trajectory that connects two adjacent footholds under prescribed torso motion, considering both environmental obstacles and the robot's kinematic constraints.

This work focuses on efficient \textbf{Foothold Reachability Criteria (FRC)} considering leg kinematic constraints and limb-terrain collision for legged locomotion,
which plays a crucial role in legged robot contact planning. This is especially important for some \revise{2.5}{non-mammalian} configuration robots with joints exposed outside the base-ground projection\cite{2019Buchanan_Weaver} that easily collide with the environment.

\begin{figure}
    \centering
    \includegraphics[width=1\linewidth]{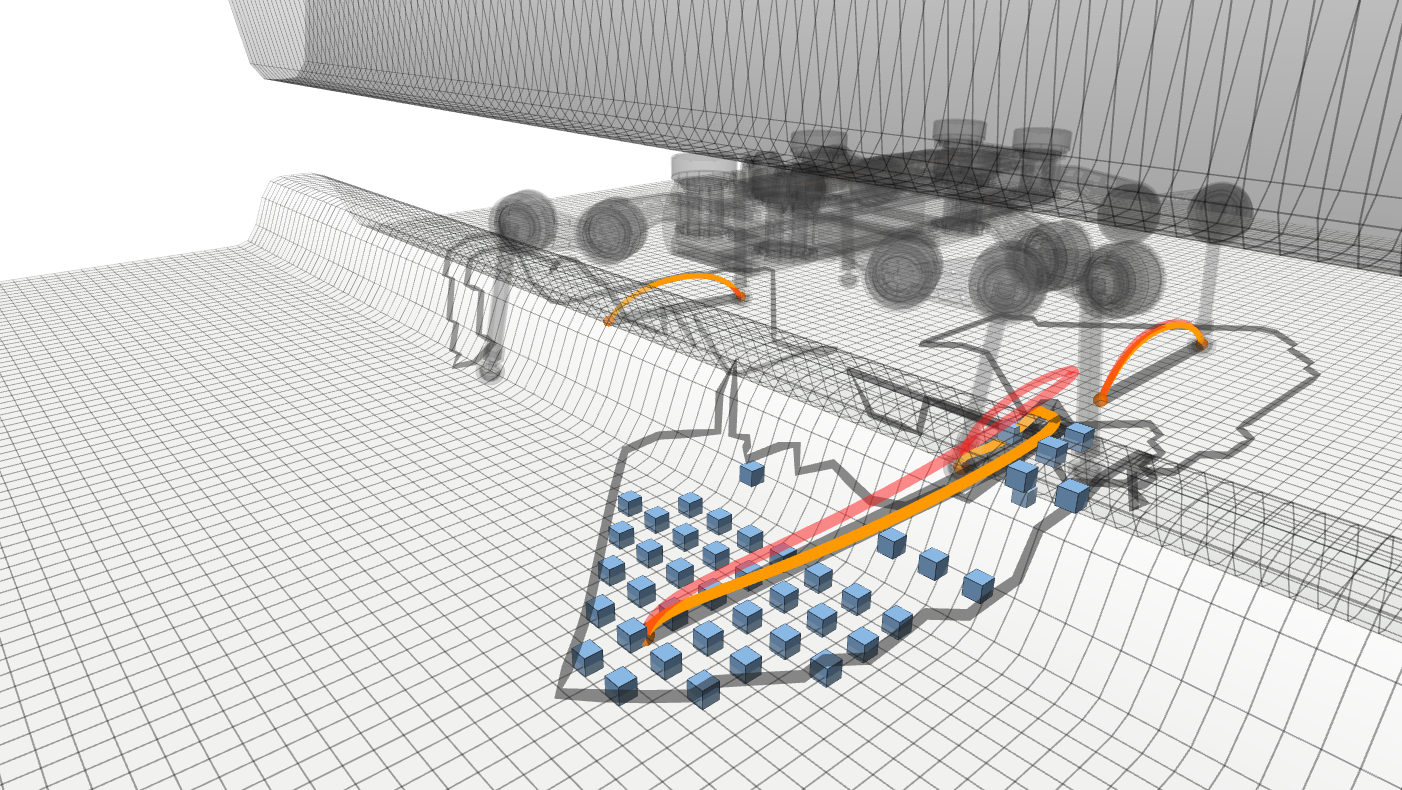}
    \caption{Kinematic Collision-Aware Foothold Reachability Criteria
        \revise{2.6}{demonstration in barrier traversal. Blue cubes show validated reachable footholds, orange/red curves show initialized/optimized swing trajectories. The hexapod robot must carefully select footholds to navigate through the confined space.}
        \revise{3.9}{More supplemental materials are available at
            \href{https://masteryip.github.io/fltplanner.github.io/}{\color{blue}{KCFRC Webpage}}}.}
    \label{fig:1_reachability_example}
\end{figure}

As illustrated in Fig. \ref{fig:1_reachability_example}, successful barrier traversal by the hexapod requires simultaneous satisfaction of multiple constraints: avoiding foot-ground collisions, preventing knee-ceiling interference, and maintaining kinematic feasibility throughout the motion.

\begin{revisebox}{1.1,4.1}
    While the FRP shares similarities with classical motion planning problems, existing approaches have limitations for real-time legged locomotion applications.
    Classical sampling-based planners such as PRM and RRT* can provide probabilistically complete solutions for reachability queries, but typically require substantial computation time (tens to hundreds of milliseconds) that exceeds real-time planning requirements.
    Recent advances in convex-set methods for manipulator motion planning, particularly Graph of Convex Sets (GCS)\cite{2023Marcucci_GCS}, offer elegant solutions but require complex geometric reasoning\cite{2023Petersen_GCS_gen} and struggle with the dynamic nature of moving-base scenarios in legged locomotion.
    GPU-accelerated batch inverse kinematics approaches, such as cuRobo\cite{2023Balakumar_curobo_report} and similar frameworks, can achieve fast collision checking during trajectory sampling but require specialized hardware, significant memory overhead for collision representation maintenance, and typically operate as post-hoc filters rather than providing a priori reachability guarantees.

    For legged robot specific approaches, Fahmi et al. proposed Foothold Evaluation Criteria (FEC) for reachability checks on predefined swing trajectories\cite{2023fahmi_ViTAL}, as shown in Fig. \ref{fig:1_FEC}. This method considers foot collision, kinematic feasibility, and leg collision while being limited to fixed trajectory patterns.
    Implicit contact planning methods such as MPC/WBC-based controllers\cite{2018fankhauser, 2022grandia_pnmpc} and RL controllers\cite{2024Miki_walk_in_confined_space} formulate the planning-control problem with related constraints and heuristic reference trajectories but do not provide explicit reachability guarantees before trajectory optimization. Without efficient reachability pre-screening, these methods may struggle with corner cases like Fig. \ref{fig:1_reachability_example} or waste computational resources on infeasible problems.
\end{revisebox}

\begin{figure}
    \centering
    \includegraphics[width=1.0\linewidth]{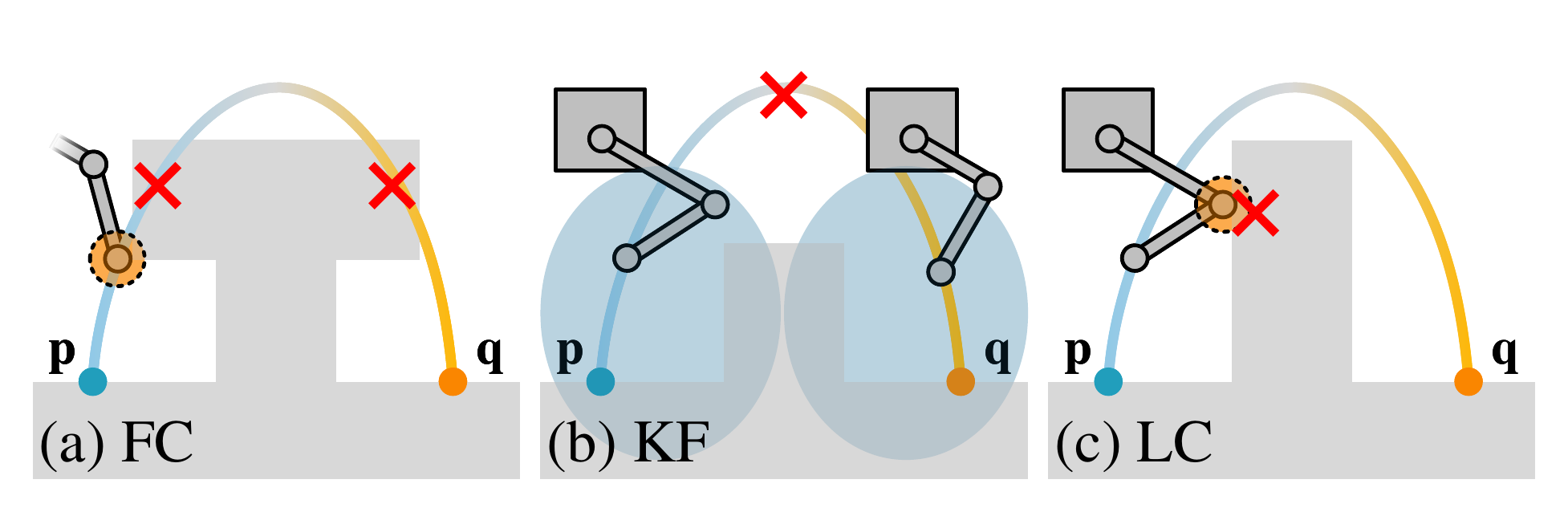}
    \caption{Foothold Evaluation Criteria.
        \revise{2.6}{(a) Foot Collision checking for foot-terrain interference, (b) Kinematic Feasibility ensuring joint limits compliance, and (c) Leg Collision detecting segment-obstacle interference. These criteria require explicit trajectory generation for validation.}}
    \label{fig:1_FEC}
\end{figure}

This paper proposes \textbf{K}inematic \textbf{C}ollision-Aware \textbf{F}oothold \textbf{R}eachability \textbf{C}riteria for legged robot contact planning, called \textbf{KCFRC}. \revise{1.1,4.1}{Our method provides a computationally efficient approach to foothold reachability analysis that complements existing trajectory planning methods. It offers a unique combination of theoretical guarantees and computational efficiency specifically designed for real-time legged locomotion applications. We define the FRP for legged robots mathematically and demonstrate a sufficient condition for swing trajectory existence using topological analysis. The method is implemented and validated on the ElSpider 4 Air robot.}

\begin{revisebox}{1.1,4.1,4.6}
    To the best of our knowledge, our approach is the first to provide theoretical guarantees about foothold reachability while serving as a fast preprocessing filter that can reduce computational burden for subsequent trajectory planning.
    The method achieves significant speed improvements (100-400x faster than sampling-based methods) while maintaining high accuracy for the specific case of 3-DOF leg chains. The key contributions of this work include:

    \begin{enumerate}
        \item \textbf{A theoretical framework providing sufficient conditions for foothold reachability.} We formulate the FRP mathematically and demonstrate a topologically-based sufficient condition that enables rapid reachability assessment without full trajectory optimization.
        \item \textbf{An efficient algorithmic implementation with substantial computational improvements.} We combine the proposed theoretical sufficient conditions with practical collision and kinematic constraint checking to develop a fast foothold reachability assessment method. The open-source implementation\footnote{\url{https://github.com/MasterYip/FLTPlanner}} achieves 100-400x speed improvements over existing methods while maintaining high accuracy.
        \item \textbf{Comprehensive validation across multiple robot platforms and environments.} The method is validated through extensive simulation and hardware experiments on the ElSpider 4 Air hexapod and Unitree A1 quadruped robots, demonstrating its generalizability across different morphologies and its effectiveness in real-world confined space navigation scenarios.
    \end{enumerate}
\end{revisebox}

%% file: sections/2_related_work.tex
\section{RELATED WORK}

\subsection{Reachability in Robotics}

Reachability in robotics varies across applications. In navigation, traversability maps \cite{2013Papadakis_Survey_Traversability,2022Borges_Survey_on_Traversability} define robot-accessible regions using terrain and motion constraints. For manipulators, reachability maps characterize end-effector pose accessibility \cite{2007Zacharias_Workspace_reachability}, with recent advances using convex set graphs for planning efficiency \cite{2023Marcucci_GCS} and dynamic grasping \cite{2021Akinola_Dynamic_Grasp_with_Reachability}.

In legged locomotion, sequential reachability constraints for torso poses \cite{2021Buchanan,2022Xu_MCTS,2018Tonneau_Acyclic_Contact_Planner} often neglect detailed foothold analysis. Implicit planning methods like MPC \cite{2022grandia_pnmpc} and RL \cite{2024Miki_walk_in_confined_space} risk failure in confined spaces by overlooking foothold/trajectory reachability, highlighting the need for integrated reachability validation.

\begin{revisebox}{2.3}
    \subsection{Foothold Reachability from Terrain Perspective}
    Early foothold selection methods focused primarily on terrain morphology. Kolter et al. \cite{2008Kolter_LittleDog} introduced terrain-aware control using foot-cost maps derived from height map features, while Kalakrishnan et al. \cite{2010Kalakrishnan} improved this through expert demonstration-based foothold ranking. Similar terrain-based approaches analyzing inclination, roughness, and geometry have been validated across various platforms \cite{2010Belter,2017Li, 2019Griffin_Footstep_humanoid}.
\end{revisebox}

\subsection{Foothold Reachability from Kinematics Perspective}

\revise{2.3}{Nevertheless, footholds should not be determined solely by terrain without considering robot kinematics.}
Fankhauser et al. \cite{2018fankhauser} and Jenelten et al. \cite{2020Jenelten_online_foothold_optimization} developed ANYmal locomotion planners incorporating foothold-score maps with kinematic feasibility constraints, while Grandia et al.\cite{2022grandia_pnmpc} used polygon dilation to reduce computational demand.
Although these methods reflect foothold reachability through optimization constraints, they do not explicitly define reachability or ensure the existence of non-collision swing trajectories.

Foothold Evaluation Criteria (FEC) \cite{2023fahmi_ViTAL} assess foot collision, kinematic feasibility, and leg collision along predefined swing trajectories (Fig. \ref{fig:1_FEC}). CNN acceleration methods \cite{2019Villarreal_Foothold_Adaptation_CNN} further optimize FEC computation.
However, this approach fails to rigorously assess the existence of feasible swing trajectories, relying instead on validity checks confined to predefined candidate trajectories or simplistic heuristics such as maximum step height thresholds.

\begin{revisebox}{1.1,4.1}
    \subsection{Foothold Reachability Check through Trajectory Planner}
    Trajectory planners have been applied for swing trajectory planning in works \cite{2011Zucker_chomp_littledog, 2018fankhauser} after footholds are decided.
    Intuitively, they can solve foothold reachability by searching for feasible swing trajectories between adjacent footholds given torso trajectory.
    However, checking foothold reachability in batches using trajectory planners is inefficient.
    Although sampling-based planners like PRM\cite{1996Kavraki_PRM} and RRT\cite{1998LaValle_RRT} are probabilistically complete, they typically require exhaustive sampling and substantial computation time in solving FRP that exceeds real-time requirements.
    Gradient-based planners\cite{2013zucker_CHOMP, 2014Schulman_TrajOpt} suffer from local minima and require initial trajectories.
    Recent convex-set methods like Graph of Convex Sets (GCS)\cite{2023Marcucci_GCS} offer elegant solutions but require complex geometric reasoning\cite{2023Petersen_GCS_gen} and struggle with moving-base scenarios in legged locomotion.
    GPU-accelerated approaches like cuRobo\cite{2023Balakumar_curobo_report} achieve fast collision checking but require specialized hardware and operate as post-hoc filters rather than providing a priori reachability guarantees.
\end{revisebox}

%% file: sections/3_reachability_prob.tex
\section{Overview of the Work}

The paper's structure follows the framework illustrated in Fig. \ref{fig:reachability_framework}. We first formulate the FRP in Section \ref{section:frp_formulation}, followed by deriving a sufficient foothold reachability condition in Section \ref{section:sufficient_cond}. Section \ref{section:KCFRC} details the KCFRC implementation through four sequential stages (Fig. \ref{fig:reachability_framework}b-e), supplemented by an explanatory animation\footnote{\url{https://masteryip.github.io/fltplanner.github.io/static/videos/method_standalone.mp4}}. Section \ref{section:experiment} analyzes it's performance through quantitative benchmarks, while Section \ref{section:applications} demonstrates applications KCFRC. The work concludes in Section \ref{section:conclusion}.

\section{Foothold Reachability Problem}

\begin{table}
    \centering
    \begin{revisebox_nobreak}{2.7, 2.8, 3.6}
        \caption{Symbol \& Abbreviation}
        \label{tab:example}
        \begin{tabular}{cl} 
            \toprule
            \textbf{Symbols}                       & \textbf{Description}                     \\
            \midrule
            $p$                                    & Current foothold                         \\
            $q$                                    & Candidate foothold                       \\
            $X$                                    & Space used in the FRP                    \\
            $D_f$                                  & Feasible domain in the FRP               \\
            $D_o$                                  & Occupied domain in the FRP               \\
            $f_P: \mathbb{E}^3 \to \mathrm{SE}(3)$ & Torso pose query function                \\
            $f_G: \mathbb{E}^2 \to \mathbb{R}$     & Ground elevation function                \\
            $f_C: \mathbb{E}^2 \to \mathbb{R}$     & Ceiling elevation function               \\
            $S_g$                                  & Guiding surface                          \\
            $f_{S_g}: \mathbb{E}^2 \to \mathbb{R}$ & Guiding surface elevation function       \\
            $S_a$                                  & Auxiliary surface                        \\
            $f_{S_a}: \mathbb{E}^2 \to \mathbb{R}$ & Auxiliary surface elevation function     \\
            $B$                                    & 2D Intersection Border of $S_a \cap D_f$ \\
            $i$                                    & Leg index                                \\
            \midrule
            \textbf{Abbreviations}                 & \textbf{Description}                     \\
            \midrule
            KCFRC                                  & Kinematic Collision-Aware                \\
                                                   & Foothold Reachability Criteria           \\
            FRP                                    & Foothold Reachability Problem            \\
            FEC                                    & Foothold Evaluation Criteria             \\
            TE                                     & Trajectory Existence                     \\
            KF                                     & Kinematic Feasibility                    \\
            NFC                                    & No Foot Collision                        \\
            NLC                                    & No Leg Collision                         \\
            \bottomrule
        \end{tabular}
    \end{revisebox_nobreak}
\end{table}

\subsection{Problem Formulation and Simplification Assumption}
\label{section:frp_formulation}

\begin{figure}[h]
    \centering
    \includegraphics[width=1\linewidth]{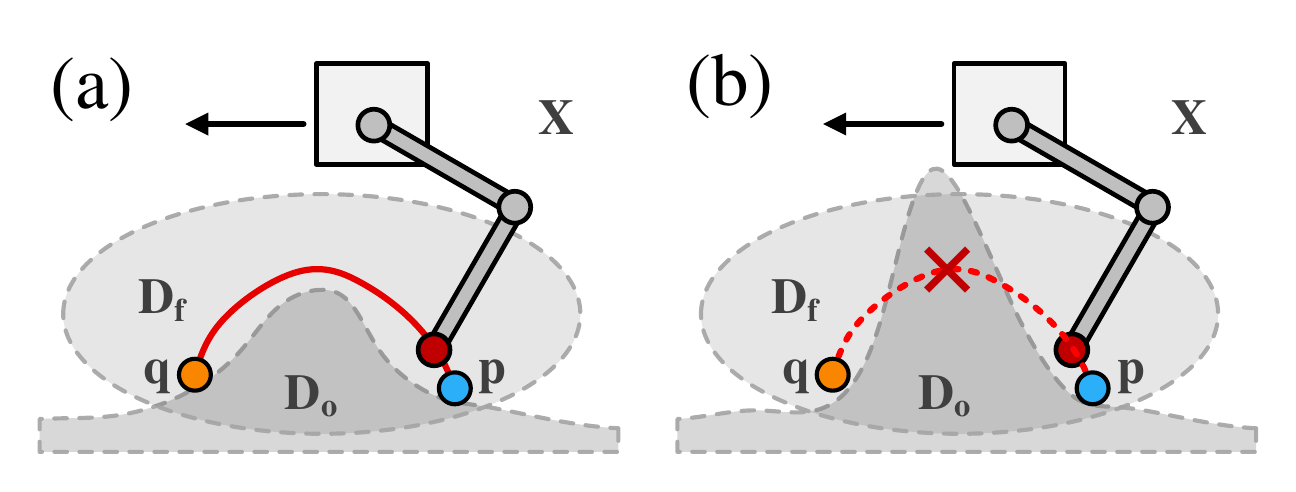}
    \caption{Reachability of the expected foothold. The foothold is reachable if feasible domain is not split by obstacles, and vice versa.}
    \label{fig:reachability_explain}
\end{figure}

\begin{figure*}[t!]
    \centering
    \includegraphics[width=1.0\textwidth]{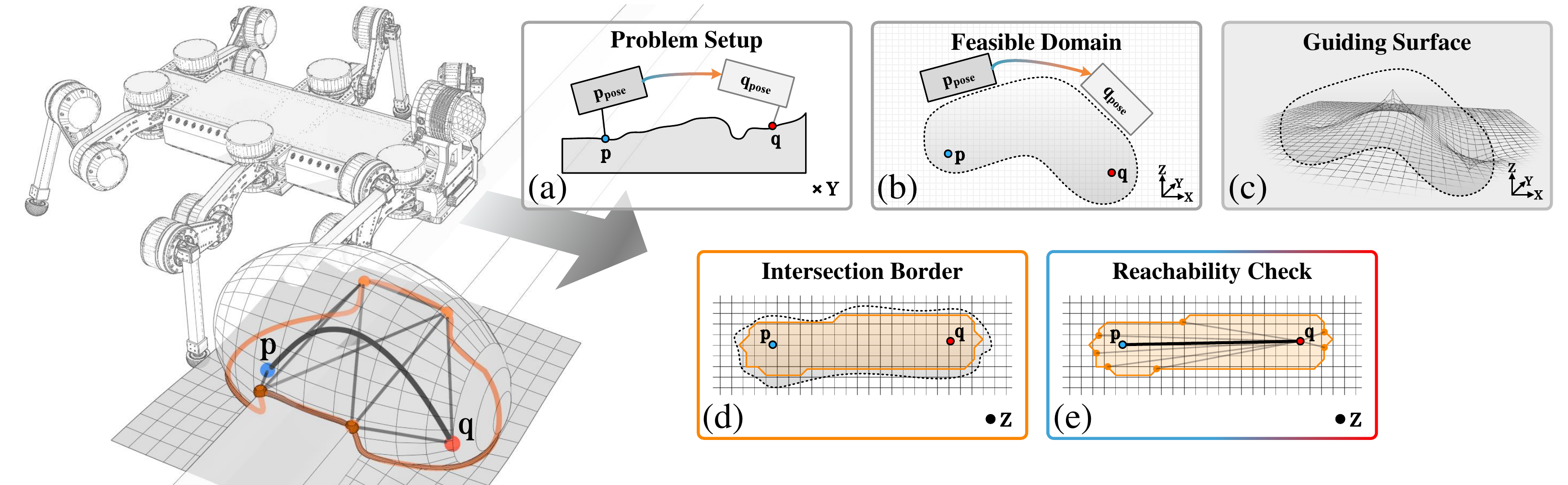}
    \caption{Overview of KCFRC. (a) Problem setup. (b) Feasible domain definition [Section \ref{section:feasible_domain}]. (c) Guiding surface generation [Section \ref{section:guid_surf}]. (d) Intersection border computing [Section \ref{section:intersect_border}]. (e) Reachability check [Section \ref{section:reachability_check}].}
    \label{fig:reachability_framework}
\end{figure*}


\begin{definition}[Foothold Reachability]
    As is shown in Fig. \ref{fig:reachability_explain}, we define the reachability of a foothold as below.
    Given torso trajectory and robot specifications, the next foothold is reachable only if the following conditions are met:

    \begin{enumerate}
        \item Trajectory Existence (TE): There exists a swing trajectory connecting lift-off and touch-down footholds during base movement.
        \item Kinematic Feasibility (KF): Joint limits are satisfied at each point of the trajectory.
        \item No Foot Collision (NFC): The foot does not collide with the terrain during the swing phase.
        \item No Leg Collision (NLC): Any part of the leg does not collide with the terrain during the swing phase.
    \end{enumerate}
\end{definition}

To clarify this problem, we formulate the reachability problem mathematically as follows.



\begin{problem}[Reachability Problem]
\label{prob:reachability}
Let \( D_f, D_o \subseteq X \) be non-empty subsets of a topological space \( X \), where \( D_f \) is a connected closed set and \( D_o \) is an open set. Given points \( p, q \) such that \( p, q \in D_f \), \( p, q \notin D_o \), and \( p \neq q \). Determine an efficient method to decide whether there exists a path \( f: [0, 1] \to (D_f \setminus D_o) \) such that \( p = f(0) \) and \( q = f(1) \).
\end{problem}


This problem is the mathematical abstraction of FRP, the following assumptions are proposed to make this problem concrete.

\begin{assumption}[Assumption for FRP]
    \label{assumption:specific_problem}
    For the consideration of computational feasibility, this project has the following specific requirements for the problem:

    (1) $X$ is the three-dimensional Euclidean space $\mathbb{E}^3$.

    (2) $D_f$ is a three-dimensional connected feasible domain.
    \revise{1.2,4.3}{
        We assume the robot torso trajectory is known during a single step, and
        there exists a continuous map $f_{P}: \mathbb{E}^3 \to \mathrm{SE}(3)$
        such that for every point $s$ in $D_f$ exists a corresponding robot base pose $f_{P}(s)$
        satisfies Kinematic Feasibility (KF) and No Leg Collision (NLC) conditions.}
    The specific definition of $D_f$ and leg collision checking method will be introduced later.

    (3) $D_o$ is the occupied domain of the three-dimensional terrain, including ceiling and ground. It can be represented as
    $$D_o : \{ (x, y, z) \in X \mid z < f_{G}(x, y), z > f_{C}(x, y) \}$$
    where $f_{G}(x, y)$ is the height map of the terrain, and $f_{C}(x, y)$ is the height map of the ceiling. Both are single-valued continuous functions, and $f_{G}(x, y) < f_{C}(x, y), \forall (x, y) \in \mathbb{E}^2$.

    (4) $p, q$ are the adjacent footholds of a leg in one step. They satisfy $p, q \in \{ (x, y, z) \in X \mid z = f_{G}(x, y) \}$.
\end{assumption}

\begin{remark}
    Under this assumption, $D_f$ is a time-invariant feasible domain. However, the robot is moving during locomotion, hence some approximations are adopted when defining $D_f$.
\end{remark}

\subsection{Sufficient Condition for Foothold Reachability}
\label{section:sufficient_cond}

\begin{lemma}
    \label{lemma:reachability}
    For Problem \ref{prob:reachability},
    if \( D_f \setminus D_o \) is homeomorphic to \( D_f \),
    then there exists a path \( f: [0, 1] \to (D_f \setminus D_o) \) such that \( p = f(0) \) and \( q = f(1) \).
\end{lemma}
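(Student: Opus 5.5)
Since $D_f \setminus D_o$ and $D_f$ are homeomorphic, connectedness carries over from $D_f$ to $D_f \setminus D_o$. We then need to upgrade connectedness to path-connectedness and join $p$ to $q$. So the route is: connectedness of $D_f\setminus D_o$, then path-connectedness, then the path $f$.

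\textbf{Step 1.} Let $h: D_f \to D_f\setminus D_o$ be a homeomorphism. Problem~\ref{prob:reachability} assumes $D_f$ is connected. The continuous image of a connected space is connected, so $D_f \setminus D_o = h(D_f)$ is connected. It is also nonempty, since $p,q \in D_f$ and $p,q\notin D_o$ give $p,q\in D_f\setminus D_o$. Moreover, $D_f$ is closed and $D_o$ is open, so $D_f\setminus D_o = D_f \cap (X\setminus D_o)$ is closed in $X$.

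\textbf{Step 2.} This is the main obstacle: connected does not imply path-connected in a general topological space (the topologist's sine curve is the standard counterexample). If $D_f$ were only assumed connected, the lemma as stated could fail. I would address this in one of two ways.

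\begin{enumerate}
\item Strengthen the reading of ``connected feasible domain'' in Assumption~\ref{assumption:specific_problem}(2) to mean path-connected, which is natural for a domain in $\mathbb{E}^3$. Path-connectedness is a topological invariant: given $a,b\in D_f\setminus D_o$, take a path $\gamma$ in $D_f$ from $h^{-1}(a)$ to $h^{-1}(b)$. Then $h\circ\gamma$ is a path in $D_f\setminus D_o$ from $a$ to $b$.
\item Keep the connectedness hypothesis and use that, under Assumption~\ref{assumption:specific_problem}(1), $X=\mathbb{E}^3$ and $D_f$ is a domain. Then $D_f$ is locally path-connected, and a connected, locally path-connected space is path-connected. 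Local path-connectedness is also preserved by the homeomorphism $h$, so the same conclusion holds for $D_f\setminus D_o$.
\end{enumerate}

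I would present option 1 as the main argument and mention option 2 as a justification. The idea behind option 2 is that path components of a locally path-connected space are open, so connectedness forces a single path component.

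\textbf{Step 3.} Apply path-connectedness of $D_f\setminus D_o$ to the points $p,q$ from Step 1. This gives a continuous $f:[0,1]\to D_f\setminus D_o$ with $f(0)=p$ and $f(1)=q$, which is the claim. The hypothesis $p\neq q$ is not needed.

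Finally, I would remark on how the homeomorphism hypothesis is used in the application. Its only role is to rule out that the obstacles $D_o$ split $D_f$ into several components, as depicted in Fig.~\ref{fig:reachability_explain}. So the weaker hypothesis ``$D_f\setminus D_o$ is path-connected'' would already suffice for the lemma.
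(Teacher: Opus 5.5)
The paper gives no proof of this lemma. It states it and calls it ``intuitively'' a sufficient condition, and its closest argument (the proof of Theorem~\ref{theorem:reachability2}) passes directly from ``connected'' to ``there exists a path.'' Your Step~1 together with Option~1 is the rigorous version of that implicit reasoning, and your caution is justified. Take $D_f$ to be a closed topologist's sine curve and $D_o$ an open ball disjoint from it. Then $D_f\setminus D_o=D_f$ satisfies the hypothesis trivially, yet no path joins a point of the limit segment to a point of the graph. So the lemma as literally stated is false. Under the added hypothesis that $D_f$ is path-connected, your transport-by-$h$ argument is a complete and correct proof.

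The step that does not hold up is Option~2, which you offer as the reason the added hypothesis is automatic. In Problem~\ref{prob:reachability}, $D_f$ is \emph{closed}, and a nonempty subset of $\mathbb{E}^3$ that is both open and closed is all of $\mathbb{E}^3$. So $D_f$ is not a ``domain'' in the open-set sense that gives local path-connectedness. Closed connected subsets of $\mathbb{E}^3$ can fail to be locally path-connected even when genuinely three-dimensional. For example, the closure of $\{(x,y,z): 0<x\le 1,\ |y-\sin(1/x)|<x^2,\ |z|<1\}$ is connected and equals the closure of its interior, but no path joins a point of the face $x=0$ to a point with $x>0$. Take ground $f_G\equiv -1$, a high ceiling, $p=(0,0,-1)$ and $q=(1,\sin 1,-1)$. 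This example is then consistent with the geometric requirements of Assumption~\ref{assumption:specific_problem}.

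So (local) path-connectedness of $D_f$ cannot be derived from the stated assumptions; you must impose it explicitly. For instance, a finite union of closed convex sets, such as the QuickHull corridors in the experiments, is locally path-connected, and then your Option~2 reasoning applies. The same caveat applies to the paper's own step in Theorem~\ref{theorem:reachability2}.
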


Intuitively, Lemma \ref{lemma:reachability} is a sufficient condition for foothold reachability. Nevertheless, it is not practical for algorithm implementation. Therefore, we introduce the concept of guiding surface, which acts as a reference and helps in dimension reduction.

\begin{definition}[Guiding Surface $S_g$]
    \label{def:guiding_surface}
    For problem \ref{prob:reachability}, under the conditions of Assumption \ref{assumption:specific_problem},
    if a single-connected two-dimensional surface $S_g$ satisfies the following conditions:

    (1) $S_g : \{(x,y,z) \in X \mid z = f_{S_g}(x, y)\}$, where $f_{S_g}$ is a single-valued continuous function.

    (2) $p, q \in S_g$.

    then $S_g$ is called a guiding surface.
\end{definition}

\begin{lemma}[Existence of Guiding Surface]
    \label{lemma:guiding_surface_existence}
    For problem \ref{prob:reachability}, under the conditions of Assumption \ref{assumption:specific_problem}, a guiding surface $S_g$ must exist.
\end{lemma}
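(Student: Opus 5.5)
The plan is to construct $S_g$ explicitly as the graph of a continuous function on $\mathbb{E}^2$ that passes through $p$ and $q$. By Assumption \ref{assumption:specific_problem}(4), $p=(p_x,p_y,f_G(p_x,p_y))$ and $q=(q_x,q_y,f_G(q_x,q_y))$. We consider two cases for the planar projections.

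\textbf{Case 1: $(p_x,p_y)\neq(q_x,q_y)$.} Take $f_{S_g}:=f_G$. Then $f_{S_g}$ is single-valued and continuous by Assumption \ref{assumption:specific_problem}(3), and both $p$ and $q$ lie on its graph. This choice also keeps $S_g$ inside the closure of the free region, since $f_G<f_C$. An alternative is an affine function agreeing with $f_G$ at the two projected points, for instance $f_{S_g}(x,y)=a x+b y+c$, where $(a,b)$ is chosen along the direction $(q_x-p_x,q_y-p_y)$; the linear system is solvable precisely because the projections differ. Taking $f_G$ itself is simpler, so we will use it.

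\textbf{Case 2: $(p_x,p_y)=(q_x,q_y)$.} Then $p$ and $q$ have the same height $f_G(p_x,p_y)$, so $p=q$. This contradicts $p\neq q$ in Problem \ref{prob:reachability}, and the case cannot occur. This observation is the only place where the hypotheses of Problem \ref{prob:reachability} really enter. It is also the natural obstacle: without assumption (4) placing both footholds on the ground graph, two vertically aligned points could not lie on a common graph, and no guiding surface would exist.

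Finally, we verify the surface is single-connected. The graph of a continuous $f:\mathbb{E}^2\to\mathbb{R}$ is homeomorphic to $\mathbb{E}^2$ via $(x,y)\mapsto(x,y,f(x,y))$, whose inverse is the projection. Hence it is connected and simply connected, so $S_g$ is a two-dimensional single-connected surface. Together with $p,q\in S_g$, this satisfies Definition \ref{def:guiding_surface}, which proves the lemma.
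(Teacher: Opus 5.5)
Your proof is correct and is essentially the paper's: it also takes $S_g$ to be the graph of $f_G$, which contains $p$ and $q$ by Assumption~\ref{assumption:specific_problem}(4). The case split is unnecessary, since $f_{S_g}=f_G$ works whether or not the planar projections of $p$ and $q$ coincide; your homeomorphism argument for single-connectedness fills in a detail the paper leaves implicit.
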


\begin{proof}
    Since $p, q \in \{ (x, y, z) \in X \mid z = f_{G}(x, y) \}$ and $f_{G}(x, y)$ is a single-valued continuous function,
    we can take $S_g : \{(x, y, z) \in X \mid z = f_{G}(x, y)\}$ as the guiding surface.
    Proof completed.
\end{proof}

\begin{remark}
    Although the existence of the guiding surface $S_g$ is evident,
    the significance of the guiding surface lies in providing a default reachable trajectory without considering terrain influences.
    Therefore, designing the guiding surface $S_g$ should ideally satisfy that $S_g \cap D_f$ is single-connected.
    However, a guiding surface that meets this condition may not always exist.
\end{remark}

After that, we define the auxiliary surface $S_a$ to ensure that $S_a$ does not intersect with the ceiling and ground.

\begin{definition}[Auxiliary Surface $S_a$]
    \label{def:aux_surface}
    For problem \ref{prob:reachability},
    under the conditions of Assumption \ref{assumption:specific_problem},
    if there exists a guiding surface $S_g$, then an auxiliary surface $S_a : \{(x, y, z) \in X \mid z = f_{S_a}(x, y)\}$ can be defined as follows:
    \begin{equation}
        \label{eq:aux_surf}
        f_{S_a}(x, y) = \begin{cases}
            f_G(x, y),     & \text{if } f_G(x, y) > f_{S_g}(x, y) \\
            f_C(x, y),     & \text{if } f_C(x, y) < f_{S_g}(x, y) \\
            f_{S_g}(x, y), & \text{otherwise}
        \end{cases}
    \end{equation}
\end{definition}

\begin{theorem}[Sufficient Condition for Reachability]
    \label{theorem:reachability2}
    For problem \ref{prob:reachability}, under the conditions of Assumption \ref{assumption:specific_problem},
    if the auxiliary surface $S_a$ corresponding to the guiding surface $S_g$ satisfies that $S_a \cap D_f$ is connected,
    then there exists a path $f: [0, 1] \to (D_f - D_o)$, satisfying $p = f(0), q = f(1)$.
\end{theorem}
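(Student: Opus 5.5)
The plan is to show that $S_a \cap D_f$ is a path-connected subset of $D_f \setminus D_o$ that contains both $p$ and $q$. A path inside it then gives the required $f$.

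\textbf{Step 1: $S_a$ avoids the terrain.} By Assumption~\ref{assumption:specific_problem}(3), $f_G < f_C$ everywhere. A short case check on \eqref{eq:aux_surf} gives $f_G(x,y) \le f_{S_a}(x,y) \le f_C(x,y)$ for all $(x,y)$:
\begin{itemize}
\item if $f_G > f_{S_g}$, then $f_{S_a} = f_G < f_C$;
\item if $f_C < f_{S_g}$, then $f_{S_a} = f_C > f_G$;
\item otherwise $f_G \le f_{S_g} \le f_C$ and $f_{S_a} = f_{S_g}$.
\end{itemize}
The first two cases cannot occur at the same point, since that would force $f_C < f_{S_g} < f_G$. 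The value is also a clamp, $f_{S_a} = \min(\max(f_{S_g}, f_G), f_C)$, so $f_{S_a}$ is continuous. Because $D_o = \{z < f_G\} \cup \{z > f_C\}$ (the union is the intended reading of (3)), every point of $S_a$ lies outside $D_o$. Hence $S_a \cap D_f \subseteq D_f \setminus D_o$.

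\textbf{Step 2: $p, q \in S_a \cap D_f$.} We have $p \in S_g$ by Definition~\ref{def:guiding_surface} and $p_z = f_G(p_x,p_y)$ by Assumption~\ref{assumption:specific_problem}(4). So $f_{S_g}(p_x,p_y) = f_G(p_x,p_y)$, which lies in the "otherwise" case. This gives $f_{S_a}(p_x,p_y) = p_z$, i.e.\ $p \in S_a$. Also $p \in D_f$ by Problem~\ref{prob:reachability}. The same argument applies to $q$.

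\textbf{Step 3: from connected to path-connected.} This is the main obstacle, since the hypothesis only says "connected" and a connected set need not be path-connected. I would argue as follows.
\begin{itemize}
\item Projection $\pi(x,y,z) = (x,y)$ restricted to $S_a$ is a homeomorphism onto $\mathbb{E}^2$, because $S_a$ is the graph of the continuous function $f_{S_a}$. So $S_a \cap D_f$ is homeomorphic to a planar set $U = \pi(S_a \cap D_f)$, and it suffices to show $U$ is path-connected.
\item $D_f$ is a closed three-dimensional region, so the natural regularity assumption is that it is a region with reasonable (e.g.\ locally path-connected, piecewise smooth) boundary. Then $U$ is a closed planar set with a locally path-connected, tame border $B$ (the intersection border used later in the paper).
\item For such sets, connected implies path-connected, because the path-components of a locally path-connected space are open.
\end{itemize}
I would state this regularity explicitly as implicit in Assumption~\ref{assumption:specific_problem}(2). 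Alternatively, I would read "connected" in the theorem as path-connected, as in Lemma~\ref{lemma:reachability}, whose conclusion is likewise a path.

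\textbf{Conclusion.} Take a path $g:[0,1]\to U$ from $\pi(p)$ to $\pi(q)$ and lift it to
\[
f(t) = \bigl(g(t),\, f_{S_a}(g(t))\bigr).
\]
This $f$ is continuous, lies in $S_a \cap D_f \subseteq D_f \setminus D_o$, and satisfies $f(0)=p$ and $f(1)=q$.
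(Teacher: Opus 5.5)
Your Steps 1 and 2 and the final lift are the paper's own argument. The paper's proof is one line: $S_a\cap D_o=\emptyset$, so $S_a\cap D_f\subseteq D_f\setminus D_o$, and it concludes ``connected, hence there exists a path''. Your Step 2 is a genuine improvement, because the paper uses $p,q\in S_a$ without checking it: $f_{S_g}=f_G$ at $p$ and $q$, so the clamp is inactive there. Making the union reading of $D_o$ explicit is also an improvement.

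The gap is the first route you give in Step 3. Regularity of $D_f$ does not make $U=\pi(S_a\cap D_f)$ locally path-connected. $S_a$ is only the graph of a continuous clamp, and it can meet even a half-space along an arbitrary closed planar set. Here is a concrete case:
\begin{itemize}
\item Let $W$ be the closed topologist's sine curve.
\item Set $f_G=\min(\mathrm{dist}(\cdot,W),\tfrac12)$ and $f_C\equiv 1$.
\item Take $D_f=\{z\le 0\}$ and $S_g$ the ground surface, as in Lemma~\ref{lemma:guiding_surface_existence}.
\item Take $p=(1/\pi,0,0)$ and $q=(0,0,0)$.
\end{itemize}
Every hypothesis of Problem~\ref{prob:reachability} and Assumption~\ref{assumption:specific_problem} holds. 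Since $f_{S_a}=f_G\ge 0$, you get $S_a\cap D_f=D_f\setminus D_o=W\times\{0\}$. This set is connected, yet $p$ and $q$ lie in different path components. So with the literal word ``connected'' the statement fails even for the most regular $D_f$.

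The regularity therefore has to be imposed on $S_a\cap D_f$ itself, or equivalently on $f_G,f_C,f_{S_g}$ relative to $D_f$; regularity of $D_f$ alone is not enough. In the grid-discretised implementation this holds automatically, since $U$ is a finite union of closed cells and hence locally path-connected.

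Your fallback, reading ``connected'' as ``path-connected'', is exactly the unstated assumption behind the paper's step ``connected, hence there exists a path''. With that reading, or with local path-connectedness of $S_a\cap D_f$ assumed directly, your argument is complete and more careful than the paper's.
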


\begin{figure}[h]
    \centering
    \includegraphics[width=1\linewidth]{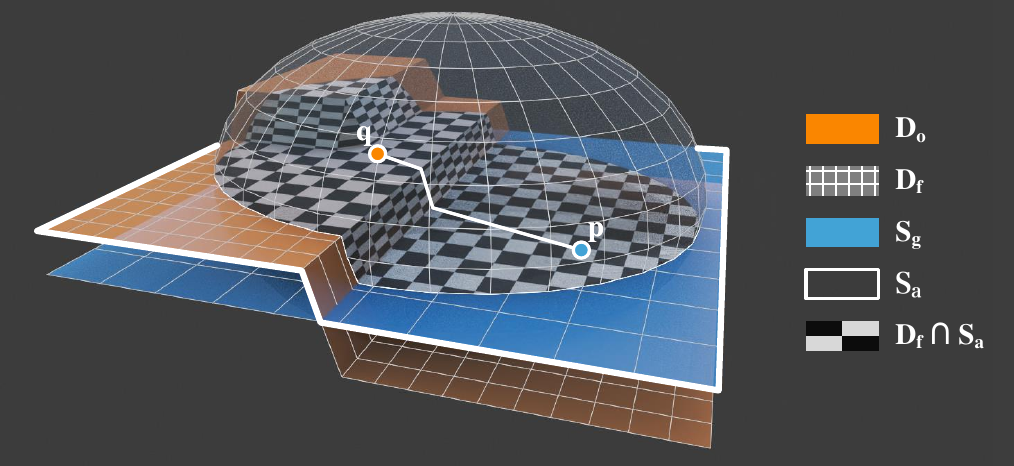}
    \caption{Diagram of the Sufficient Condition for Reachability (Theorem \ref{theorem:reachability2}).}
\end{figure}

\begin{proof}
    Since $D_o : \{ (x, y, z) \in X \mid z < f_{G}(x, y), z > f_{C}(x, y) \}$,
    we have $S_a \cap D_o = \emptyset$, so $S_a \cap (D_f - D_o) = S_a \cap D_f$.
    Given that $S_a \cap D_f$ is connected, it follows that $S_a \cap (D_f - D_o)$ is connected, hence there exists a path $f: [0, 1] \to (D_f - D_o)$, satisfying $p = f(0), q = f(1)$.
\end{proof}


As long as $D_f, D_o, S_g, p, q$ are decided, it is possible to check the foothold reachability according to Theorem \ref{theorem:reachability2}. The establishment of $D_f, D_o, S_g$ will be discussed in the following sections.
It is worth noting that the condition in Theorem \ref{theorem:reachability2} is a sufficient condition for the existence of solutions to the non-collision swing trajectory.
However, a well-defined guiding surface may adapt to most cases in legged robot locomotion.

%% file: sections/4_kcfrc_check.tex
\section{Kinematic Collision-Aware FRC}
\label{section:KCFRC}


\subsection{Terrain Representation}

We represent the environment and obstacles in the form of Grid Map\cite{2016Fankhauser_GridMap}, which is widely used in legged locomotion. To represent obstacles in confined spaces, the ground and ceiling layers are both defined and generated in real-time.

\subsection{Kinematics and Collision Detection}
\label{section:coll_detect}

\begin{figure}[h]
    \centering
    \includegraphics[width=0.9\linewidth]{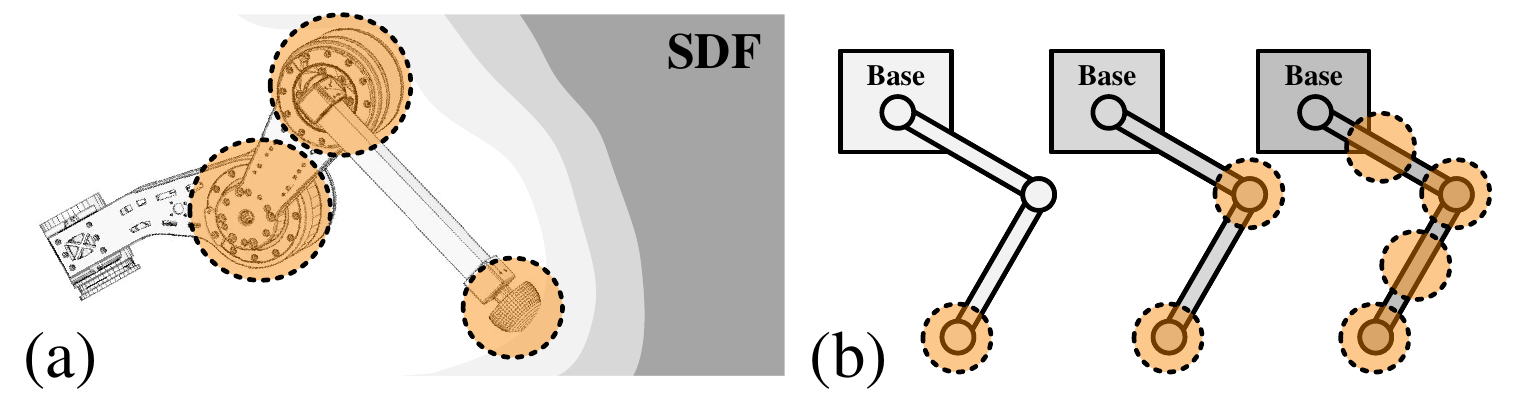}
    \caption{Collision detection method. (a) Collision detection under SDF. (b) Foot-collision-only, joint-collision-only and link-subdivision collision models.}
    \label{fig:collision_ball}
\end{figure}

Leg kinematics and inverse kinematics are computed using Pinocchio\cite{2019carpentier_pinocchio} and IKFast\cite{2010Diankov_IKFast}, respectively. For collision detection, we compute the Signed Distance Field (SDF) from the grid map \revise{4.7}{asynchronously in a separate thread} and model collision volumes using sets of spheres (Fig. \ref{fig:collision_ball}). This sphere-based approach enables efficient collision validation through distance queries against the precomputed SDF.

\subsection{Feasible Domain}
\label{section:feasible_domain}

For a 3-DOF robot leg, the conversion from configuration space $C$ to task space $T$ is a surjective mapping that may become bijective with appropriate joint limits. The space-time feasible domain $D_{fi}$ represents the region swept by task space $T_i$ during single-step movement:
\begin{equation}
    D_{fi} : \{(x,y,z,t) \mid (x,y,z) \in T_i(t), t_0 < t < t_1\}
\end{equation}

\begin{figure}[h]
    \centering
    \includegraphics[width=0.8\linewidth]{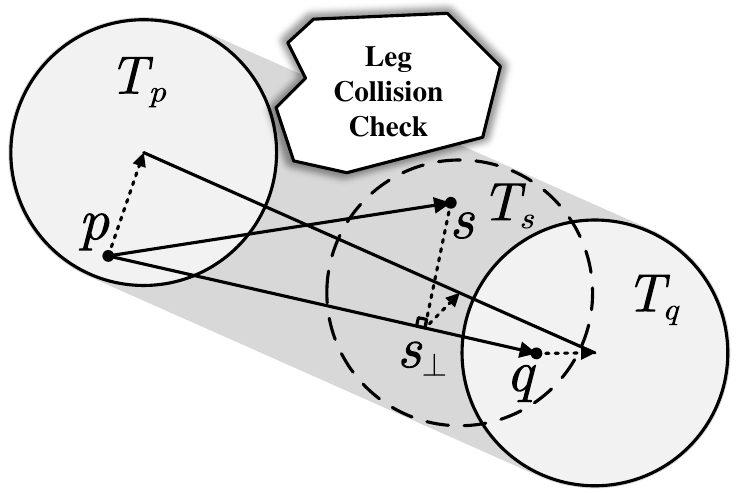}
    \caption{Approximate feasible domain PITD.}
    \label{fig:PITD}
\end{figure}

Since $D_{f}$ is defined in the space-time domain which causes troubles in calculation, we introduce the \textbf{Pose Interpolation Task Domain (PITD)} approximation. Given base poses $P_p, P_q \in \mathrm{SE}(3)$ and leg task space $T_{ib}$ under base coordinates, we compute interpolated task spaces as shown in Fig. \ref{fig:PITD}.

\begin{revisebox}{1.2,4.3}
    The PITD formulation addresses foothold reachability by approximating the continuous leg workspace evolution during robot base movement. As the robot transitions from pose $P_p$ to $P_q$, the interpolation parameter $r(s)= l_{ps_\perp} / l_{pq} \in [0,1]$ represents the progression along this transition:
    
    $$
        f_P^{PITD}(r) = \exp \left( r \cdot \log( P_qP_p^{-1}) \right) \cdot P_p
    $$

    $$
        T_{s}(r)= f_P^{PITD}(r) \cdot T_{ib}
    $$
    
    Here, $f_P^{PITD}(r)$ computes the interpolated robot base pose using SE(3) exponential mapping, while $T_s(r)$ gives the leg's reachable workspace at the interpolated pose. This enables efficient evaluation of whether candidate footholds remain kinematically accessible throughout the robot's base motion.
\end{revisebox}

The PITD feasible domain for leg $i$ is then defined as:
\begin{equation}
    \label{eq:pitd}
    D_f^{PITD} = \{s \mid s \in T_{s}(r(s)), \mathrm{LegColl}(s) \ne True \}
\end{equation}
where IKFast verifies kinematic feasibility and precomputed gridmap SDF ensures collision-free conditions according to Section \ref{section:coll_detect}.

\subsection{Guiding Surface Design}
\begin{figure}[ht]
    \centering
    \includegraphics[width=0.8\linewidth]{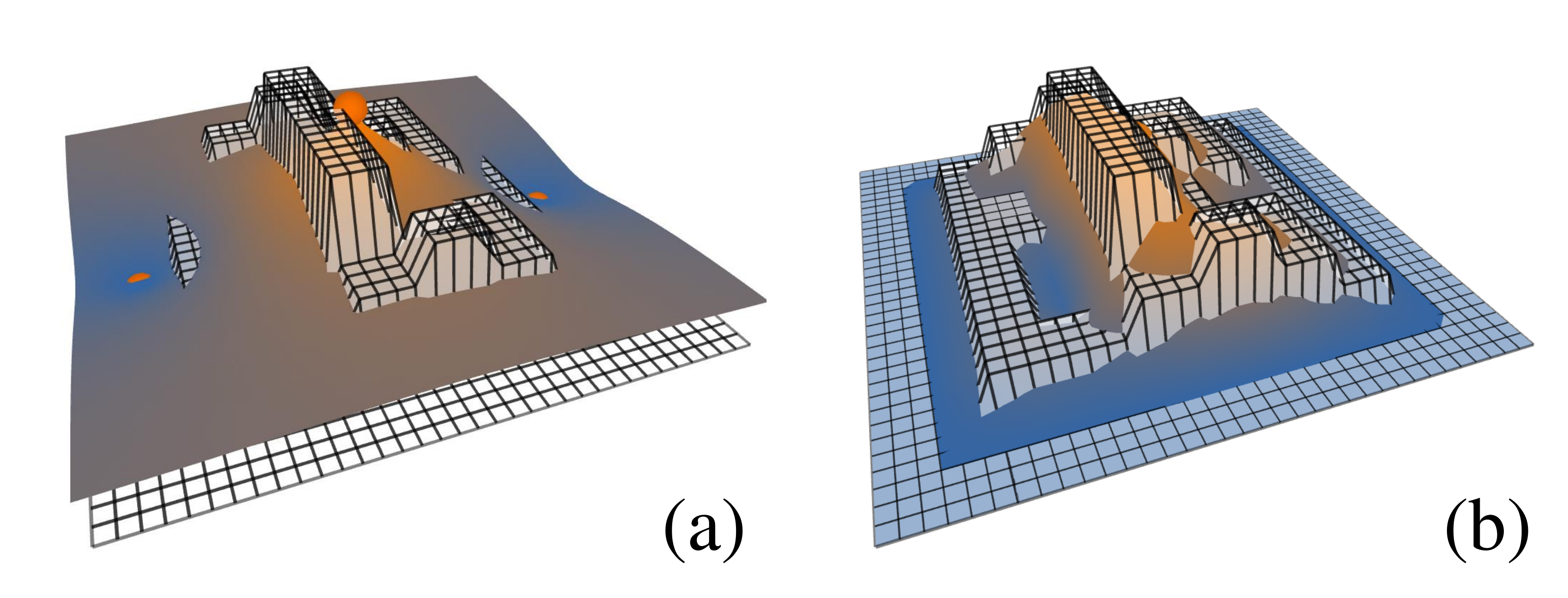}
    \caption{Grid map and generated guiding surfaces. (a) Keypoint-weighted guiding surface \revise{3.4}{creates smooth interpolation between key points with $k=1, w_i=1$}. (b) Convolutional guiding surface \revise{3.4}{smooths terrain features using $n=5, l=0.03$ m kernel}. The grid map resolution is 0.05 m.}
    \label{fig:harmonic_guiding_surface}
\end{figure}
\label{section:guid_surf}

\begin{revisebox}{3.4}
    The guiding surface $S_g$ plays a crucial role in creating the auxiliary surface $S_a$ that helps determine foothold reachability. Intuitively, $S_g$ acts as a "bridge" that guides the leg motion between start and goal positions, ensuring that the intersection $S_a \cap D_f$ forms a connected domain for efficient reachability analysis. The surface should smoothly connect the start and goal footholds while avoiding major obstacles, creating a feasible pathway for leg swing motion.

    \textbf{Keypoint-Weighted Guiding Surface}. This approach creates a smooth surface by interpolating between strategically placed keypoints. The guiding surface $S_g : \{(x,y,z) \in X \mid z = f_{S_g}(x, y)\}$ is constructed using keypoints $B = \{ p_1, p_2, \cdots, p_n \}$ through inverse distance weighting:
    \begin{equation}
        \label{eq:kp_guid_surf}
        f_{S_g}(x, y) = \frac{ \sum_{i=1}^{n} \frac{w_iz_i}{r_i^k}}
        {\sum_{i=1}^{n} \frac{w_i}{r_i^k}}
    \end{equation}
    where $p_i = (x_i, y_i, z_i)$ are the keypoints, $r_i = \| (x, y) - (x_i, y_i) \|$ is the Euclidean distance from point $(x,y)$ to keypoint $i$, $w_i$ is the influence weight of keypoint $p_i$, and $k$ controls the smoothness (higher $k$ creates sharper transitions).
    For FRP, we set $p_1$ and $p_n$ as the start and goal positions, with intermediate keypoints sampled along the connecting line to create a natural pathway. As visualized in Fig. \ref{fig:harmonic_guiding_surface}(a), this creates a smooth surface that naturally guides the leg motion around obstacles.

    \textbf{Convolutional Guiding Surface}. For batch foothold reachability checks where specific goal positions are not predetermined, we employ a terrain-adaptive approach. The guiding surface smooths local terrain variations through convolution with the ground elevation map:
    \begin{equation}
        \label{eq:conv_guid_surf}
        f_{S_g}(x, y) = \frac{\sum_{i,j=1}^nf_G\left(x+(i-\frac{n+1}{2})l, y+(j-\frac{n+1}{2})l\right)}{n^2}
    \end{equation}
    where $n$ is the convolution kernel size (odd number), $l$ is the grid spacing, and $f_G$ is the ground elevation function. This averaging operation removes sharp terrain features that could fragment the feasible domain, creating a smoother surface for more robust reachability analysis as shown in Fig. \ref{fig:harmonic_guiding_surface}(b).
\end{revisebox}

\subsection{Intersection Border Search}
\label{section:intersect_border}

After $S_g$ and $S_a$ are obtained, it becomes possible to search the intersection border, representing $S_a \cap D_f$ in the 2D form that comes with better performance.
As is shown in Fig. \ref{fig:intersection_border_search},
we search the intersection border directly on the grid map for simplicity.
\begin{figure}[h]
    \centering
    \includegraphics[width=1.0\linewidth]{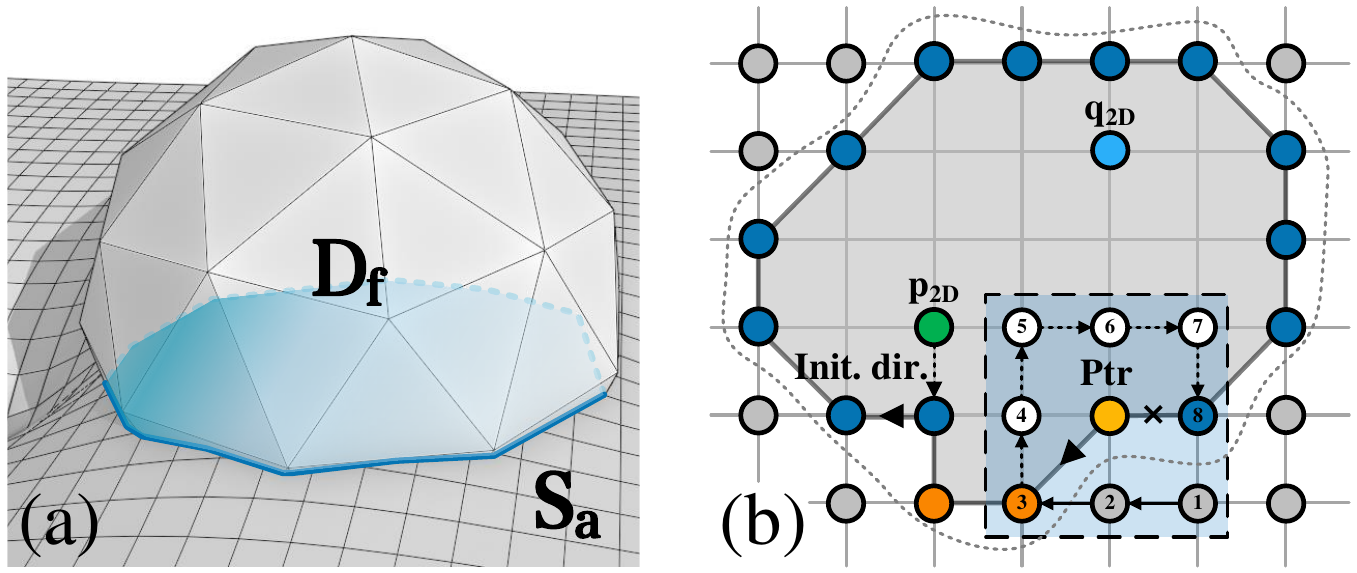}
    \caption{2D intersection border search. (a) Intersection of $D_f$ and $S_a$. (b) schematic diagram of Algorithm \ref{alg:intersect_border_search}.}
    \label{fig:intersection_border_search}
\end{figure}

Algorithm \ref{alg:intersect_border_search} computes intersection border $B$ given the feasible domain $D_f$, auxiliary surface $f_{S_a}$, start point $p$ and goal point $q$.

\begin{algorithm}[h]
    \caption{2D Intersection Border Search}
    \label{alg:intersect_border_search}
    \renewcommand{\algorithmicrequire}{\textbf{Input:}}
    \renewcommand{\algorithmicensure}{\textbf{Output:}}
    \begin{algorithmic}[1]
        \Require $D_f$, $f_{S_a}$, $p$, $q$  
        \Ensure $B$                   
        \State Find nearest grid index $p_{2D}, q_{2D}$ of $p,q$.
        \State $Ptr \leftarrow$ First border point index.
        \State $B$.append($Ptr$)
        \While{$Ptr \neq B[0]$ or $\mathrm{len}(B) < 3$}
        \For{$\delta \in$ index around $Ptr$ clockwise}
        \If {$\mathrm{3DPos}(\delta, f_{S_a})$ is in $D_f$ for the first time}
        \State $Ptr \leftarrow \delta$
        \State $B$.append($Ptr$)
        \State \textbf{break}
        \EndIf
        \EndFor
        \EndWhile
        \State \Return $B$
    \end{algorithmic}
\end{algorithm}

\subsection{Reachability Check}
\label{section:reachability_check}

\begin{figure}[h]
    \centering
    \includegraphics[width=0.8\linewidth]{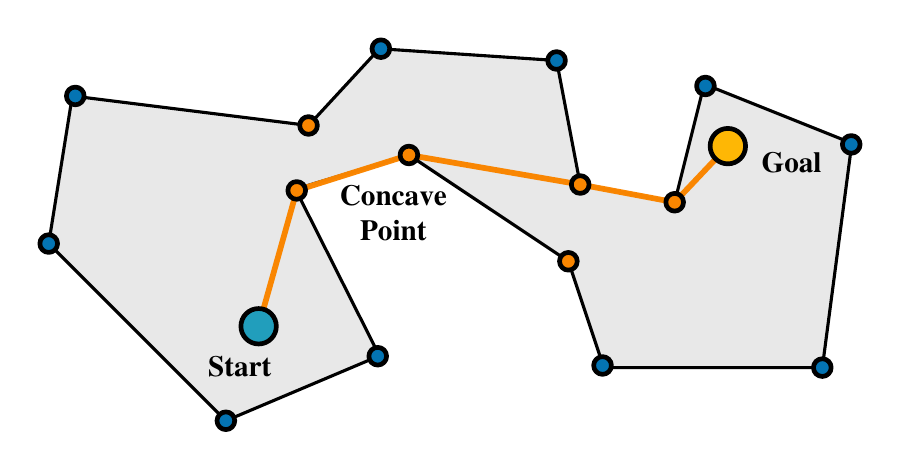}
    \caption{Shortest path in projected 2D feasible domain. The shortest path in this domain goes through concave points and connects start and goal points.}
    \label{fig:shortest_path_astar}
\end{figure}

Visibility graph is introduced for reachability judgment.
As is shown in Fig. \ref{fig:shortest_path_astar}, it can be proved that the shortest path from start to goal consists of line segments among start $p$, goal $q$ and concave points of the projected 2D connected domain, besides, adjacent points on the shortest path must be visible to each other.
If the goal in the projected 2D connected domain is reachable, then the shortest path connecting $p$ and $q$ exists.
Therefore, the goal is \textbf{reachable} if and only if the goal $q$ is visible to start point $p$ or any concave point on the intersection border $B$.
Algorithm \ref{alg:kcfrc} describes the major steps of KCFRC.

\begin{algorithm}[h]
    \caption{Kinematic Collision-Aware FRC}
    \label{alg:kcfrc}
    \renewcommand{\algorithmicrequire}{\textbf{Input:}}
    \renewcommand{\algorithmicensure}{\textbf{Output:}}
    \begin{algorithmic}[1]
        \Require $p$, $q$, $P_p$, $P_q$, $i$, $f_G$, $f_C$,   
        \Ensure bool                   
        \State Calculate SDF of $f_G$ and $f_C$ \Comment{Section \ref{section:coll_detect}}
        \State Generate $S_g$ \Comment{Eq. (\ref{eq:kp_guid_surf}) or (\ref{eq:conv_guid_surf})}
        \State Obtain $f_{S_a}$ from $f_{S_g}, f_G, f_C$ \Comment{Eq. (\ref{eq:aux_surf})}
        \State Construct $D_f^{PITD}$  \Comment{Eq. (\ref{eq:pitd})}
        \State $B \leftarrow \mathrm{IntersectBorder}(D_f^{PITD}, f_{S_a}, p, q)$ \Comment{Alg. \ref{alg:intersect_border_search}}
        \State Obtain concave points $P_{\mathrm{concave}}$ from $B$
        \State Set up $\mathrm{VisGraph}(p_{2D}, q_{2D}, P_{\mathrm{concave}})$ \Comment{Section \ref{section:reachability_check}}
        \For{$x \in \{p_{2D}\} \cup P_{\mathrm{concave}}$}
        \If {$\mathrm{Visible}(x, q_{2D})$}
        \State \Return \textbf{True}
        \EndIf
        \EndFor
        \State \Return \textbf{False}
    \end{algorithmic}
\end{algorithm}

%% file: sections/5_experiments.tex
\section{Experiment Results}
\label{section:experiment}

To illustrate the performance of KCFRC, we conduct an experiment that examines its time performance and scalability, and compare it to other methods.
All experiments were conducted on an Intel NUC12WSKi7 (Intel® Core™ i7-1260P) running Ubuntu 20.04 LTS and ROS Noetic. The algorithms were executed entirely on the CPU without any GPU acceleration.

\subsection{Time Performance and Scalability}

\begin{figure}
   \centering
   \includegraphics[width=1\linewidth]{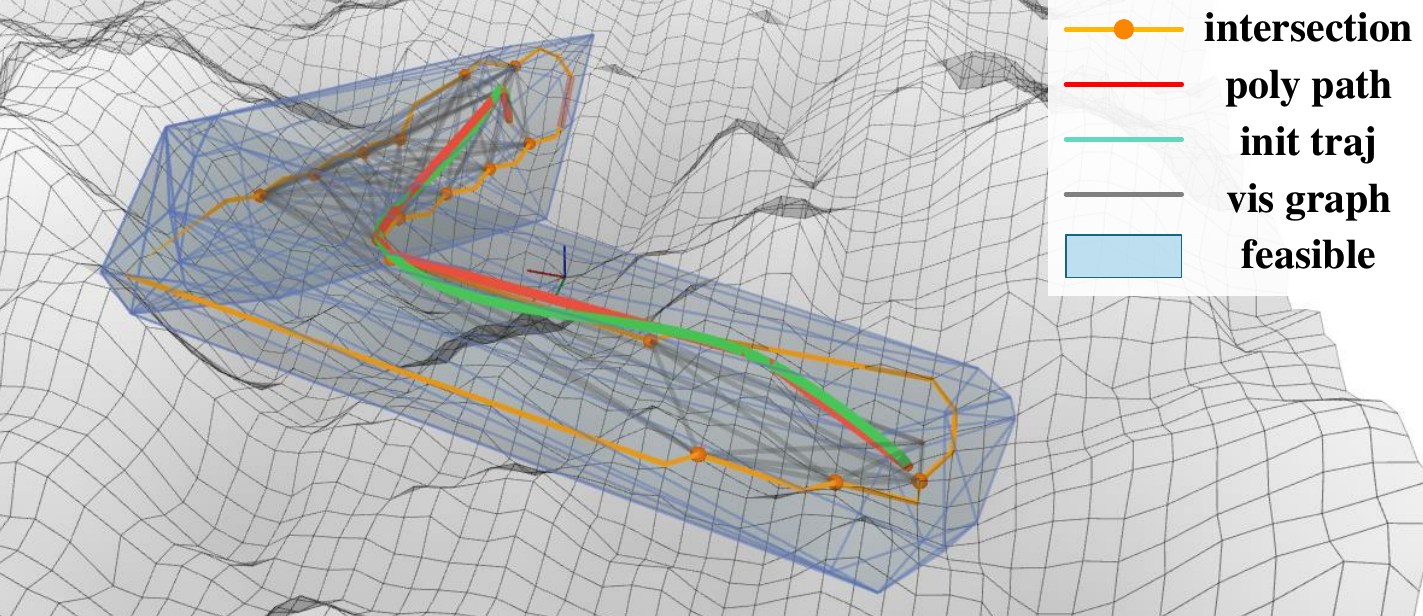}
   \caption{Reachability check experiment. Random convex polyhedra corridor is generated as the feasible domain, and stochastic fractal grid map as obstacles. Meanwhile, the shortest polygon path, visibility graph, and initialized trajectory are visualized.}
   \label{fig:reachability_experiment}
\end{figure}

\begin{figure}
   \centering
   \includegraphics[width=1\linewidth]{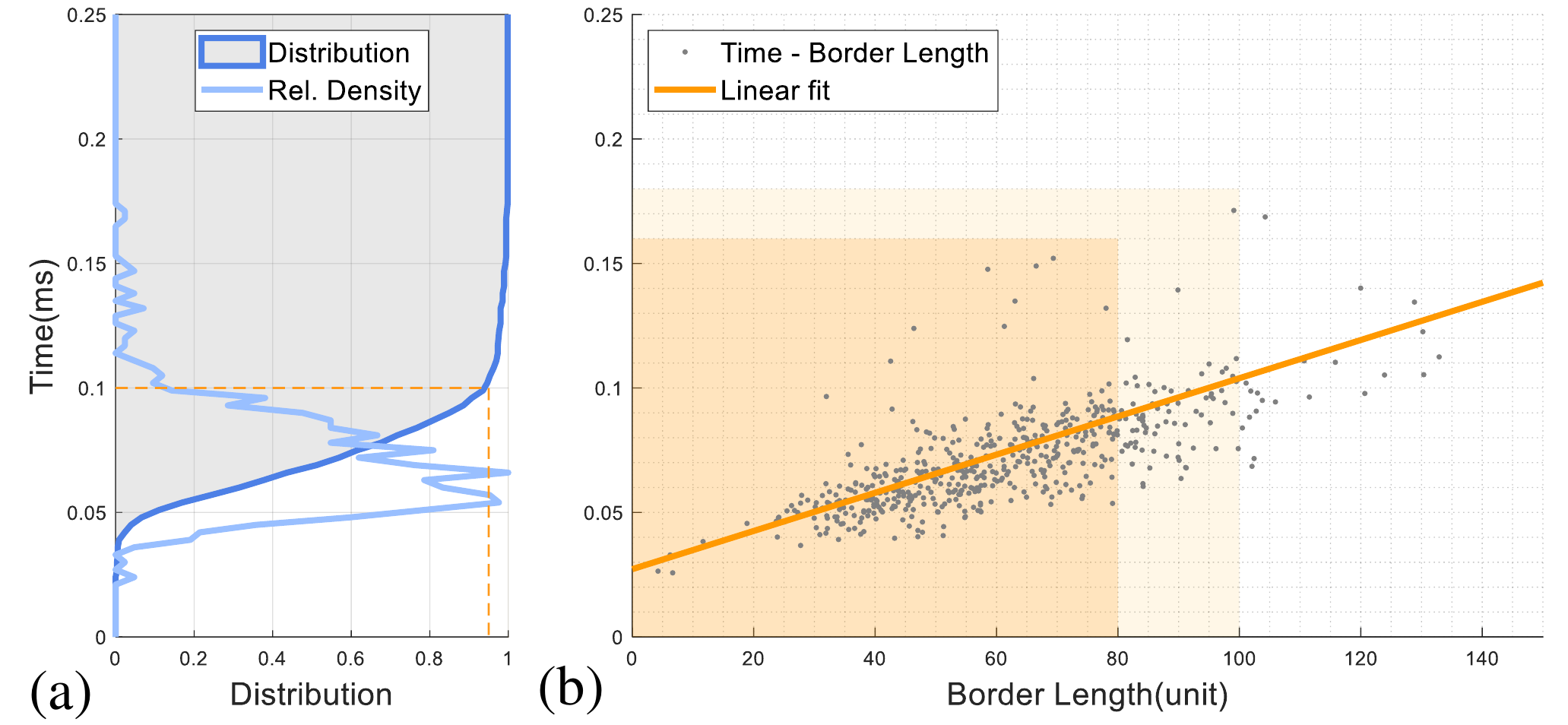}
   \caption{Reachability check time w.r.t intersection border length. Linear fitting of the data is $y=ax+b, a=0.00077, b=0.02722$.}
   \label{fig:reachability_judge_perf}
\end{figure}

Our algorithm is tested in generated scenarios to assess KCFRC performance. Fig. \ref{fig:reachability_experiment} shows one experiment scenario with a $60 \times 60$ fractal noise grid map and randomly generated feasible domain using 3D QuickHull\cite{1996Barber_QuickHull}. Start and goal points are randomly selected at corridor ends for reachability checking, with kinematic and leg collision checks replaced by convex hull checks.

We evaluate computational performance across 1,000 test cases, recording intersection border length and processing time. Fig. \ref{fig:reachability_judge_perf}(a) shows 95\% of cases resolve within 0.1 ms (0.04-0.1 ms typical range). Fig. \ref{fig:reachability_judge_perf}(b) demonstrates linear scaling with border size, with computation consistently under 0.16 ms for border lengths below 80 units and maximum time below 0.2 ms, meeting real-time requirements.

\subsection{Foothold Reachability Check Comparison}

\begin{figure}
   \centering
   \begin{revisebox_nobreak}{4.5, 4.8}
      \includegraphics[width=1\linewidth]{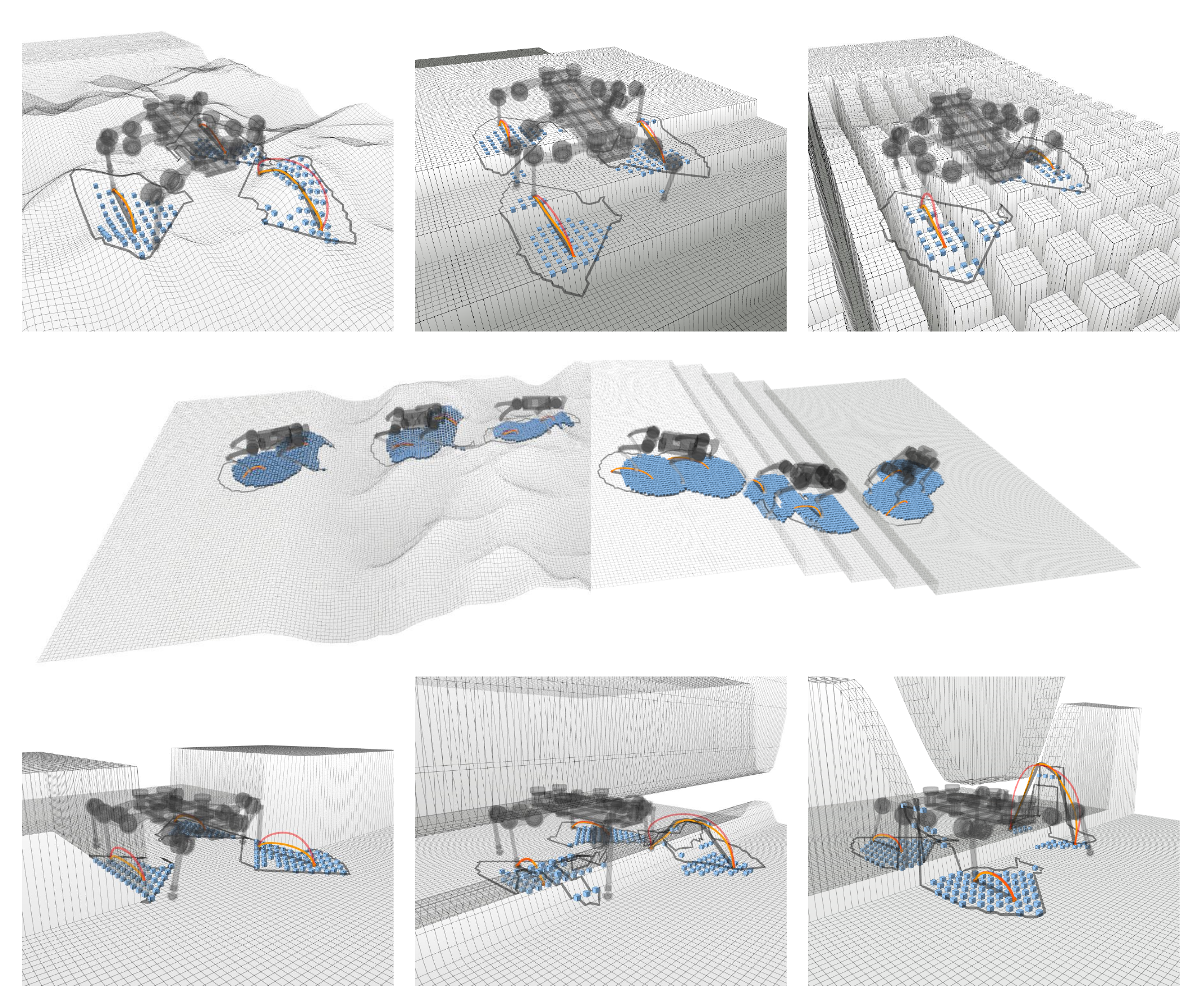}
      \caption{Representative planning scenes for foothold reachability check comparison. \textbf{Top~Left\&Mid:} Hexapod dense planning scenes. \textbf{Top~Right:} Hexapod sparse planning scenes. \textbf{Mid-Row:} Quadruped dense planning scenes. \textbf{Bottom~Row:} Hexapod confined planning scenes. Planned contact states are recorded for foothold reachability check benchmarking. The grid map resolution is set to 2.5 cm/grid.}
      \label{fig:exp_rc_scenes}
   \end{revisebox_nobreak}
\end{figure}

\begin{figure}
   \centering
   \includegraphics[width=1\linewidth]{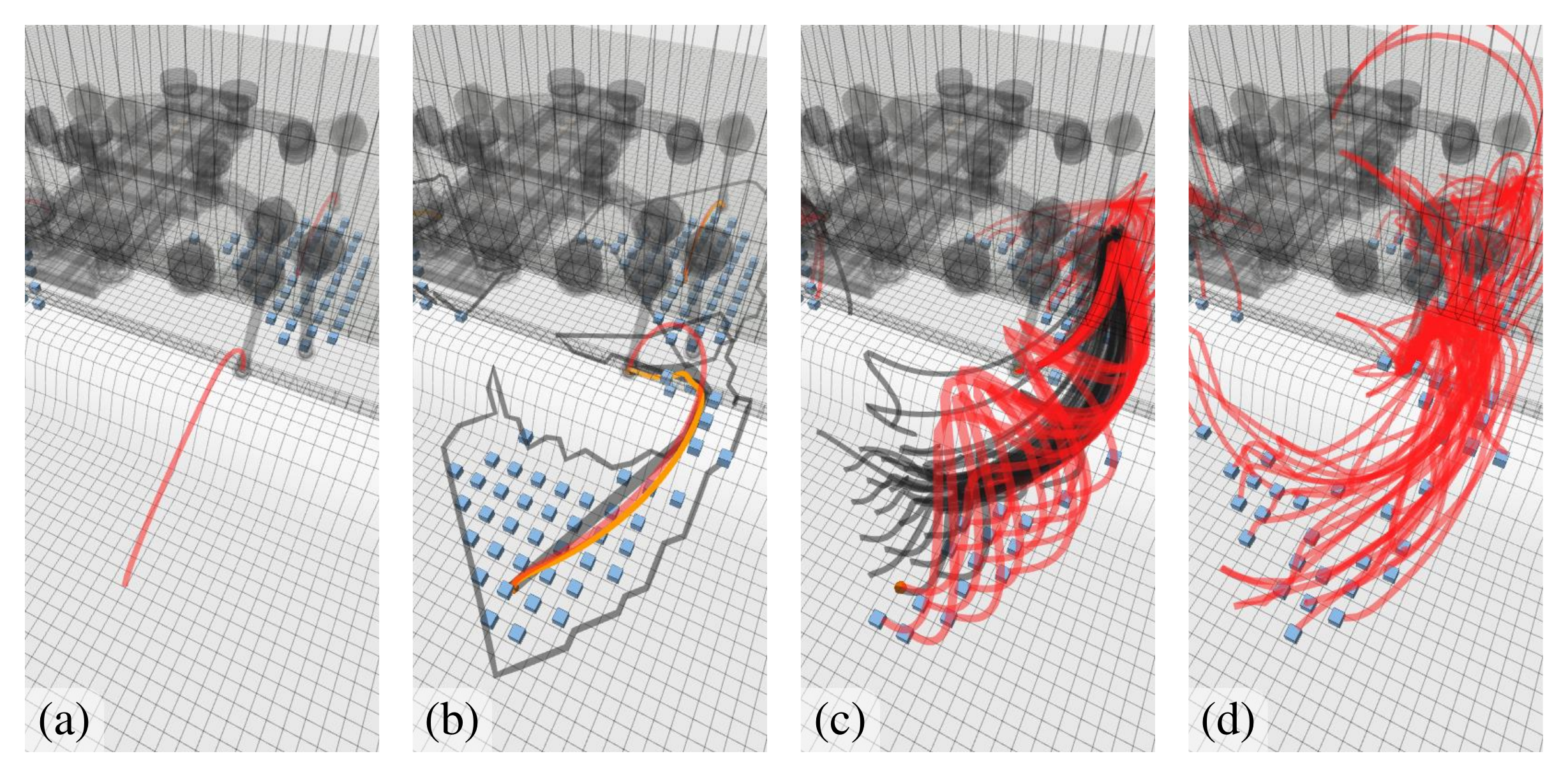}
   \caption{Foothold reachability check methods participate in the reachability check experiment for comparison. (a) FEC with predefined swing trajectory. (b) KCFRC. (c) RRT-Connect. (d) STOMP.}
   \label{fig:exp_rc_planners}
\end{figure}



\subsubsection{Planning Scenes Setup}
To systematically evaluate KCFRC's performance in foothold reachability check,
we benchmark different methods on ElSpider 4 Air hexapod \revise{4.5, 4.8}{and Unitree A1 quadruped} in several representative planning scenes (Fig.~\ref{fig:exp_rc_scenes}).
For each scene, contact states are precomputed by the MCTS planning pipeline\cite{2022Xu_MCTS} and stored. During execution, these states are reloaded to perform foothold reachability checks. For every swing leg transition between consecutive contact states, we sample \textbf{900 candidate footholds} (arranged in a 30$\times$30 structured grid with 0.025~m resolution) around the nominal foothold position of the target state, ensuring total coverage of legs' task space.

\subsubsection{Methods for Comparison}
As shown in Fig.~\ref{fig:exp_rc_planners}, the methods participating in this experiment are as follows, with their key parameters listed in Table \ref{tab:crucial_param}:
\begin{itemize}
   \item \textbf{KCFRC (keypoint-weighted guiding surface)}. \revise{3.2}{Provides higher accuracy by updating the intersection border for every candidate foothold. It is suitable for cluttered or confined environments where accuracy is critical.}
   \item \textbf{KCFRC (convolutional guiding surface)}. \revise{3.2}{Optimized for speed by updating the intersection border once per leg. It is ideal for sparse or real-time applications where computational efficiency is paramount.}
   \item \textbf{RRT-Connect}\cite{2000Kuffner_RRTConnect}. We implement a collision validator and enforce monotonic time-increasing constraints in the configuration space. If a valid path is generated within the time limit, the target foothold is considered reachable.
   \item \textbf{STOMP}\cite{2011Kalakrishnan_STOMP}. We set up STOMP to optimize the trajectory initialized from a linear interpolated one in configuration space, applying the constraints mentioned in previous sections as soft constraints. If the algorithm ends up with a valid trajectory, the candidate foothold is deemed reachable.
   \item \textbf{FEC}\cite{2023fahmi_ViTAL}. We examine the predefined swing trajectory according to FEC. The swing trajectory clears the maximum sampled grid map height on the line connecting adjacent footholds. If no violation of FEC is detected, it is regarded as a reachable foothold.
\end{itemize}

\begin{table}[htbp]
   \centering
   \caption{Key Parameters of the Methods}
   \label{tab:crucial_param}
   \begin{tabular}{lcl} 
      \toprule
      \textbf{Parameters} & \textbf{Value} & \textbf{Description}                      \\
      \midrule
      \textbf{Robot}                                                                   \\
      ElAir Knee          & 0.05 m         & Knee collision ball radius.               \\
      ElAir Foot          & 0.03 m         & Foot collision ball radius.               \\
      A1 Knee             & 0.03 m         & Knee collision ball radius.               \\
      A1 Foot             & 0.02 m         & Foot collision ball radius.               \\
      \midrule
      \textbf{Ground Truth}                                                            \\
      Traj Space          & -              & Configuration space.                      \\
      Max Step            & 0.6            & Max search step in configuration space.   \\
      Max Time            & 10s           & Max search time.                          \\
      Max Retry           & 5             & Retry till success up to 5 times.        \\
      \midrule
      \textbf{RRT-Connect}                                                             \\
      Traj Space          & -              & Configuration space.                      \\
      Max Step            & 0.4            & Max search step in configuration space.   \\
      Max Time            & 1ms/50us       & Max search time.                          \\
      \midrule
      \textbf{STOMP}                                                                   \\
      Traj Space          & -              & Configuration space.                      \\
      Resolution          & 20             & Trajectory resolution in optimization.    \\
      Max Iters           & 100            & Max iterations.                           \\
      Rollout Num         & 10             & Number of rollouts for sampling.          \\
      \midrule
      \textbf{FEC}                                                                     \\
      Traj Space          & -              & Euclidean space.                          \\
      Traj Gen            & -              & 2-Stage Hermite (apex clears max height). \\
      Resolution          & 50             & Resolution of criteria evaluation.        \\
      \bottomrule
   \end{tabular}
\end{table}
While all the methods nominally address kinematic feasibility, foot collision, and leg collision constraints, no existing approach provides rigorous reachability verification.
\revise{2.1, 4.4, 4.7}{To compare these methods' accuracy, precision and recall rates, we establish an \textbf{Exhaustive RRT baseline} as the \textbf{Ground Truth} method (see Table \ref{tab:crucial_param}). This exhaustive RRT provides a probabilistically complete reference by allowing extensive search time (up to 10s per foothold with max 5 retries) to thoroughly explore the configuration space. The method ensures reliable ground truth establishment by eliminating concerns about circular validation, as it operates independently from our proposed KCFRC approach using fundamentally different search principles.}

\begin{table}[h]
   \centering
   \begin{revisebox_nobreak}{4.5, 4.8}
      \caption{Foothold Reachability Check Comparison}
      \label{tab:rc_cmp}
      \input{figures/6_experiments/rc_cmp_table_v2}
   \end{revisebox_nobreak}
\end{table}

\subsubsection{Performance Metrics}
Through all the planning scenes and planned contact states mentioned above, computation time and binary reachability matrices ($30 \times 30$) are collected for each leg in a single contact state transition.
Table \ref{tab:rc_cmp} shows the time performance (per leg, 900 candidate footholds) and the accuracy, precision, and recall rates (of all footholds) of these methods in confined, dense, and sparse scenes respectively.

\subsubsection{Method Comparison}
\begin{revisebox}{4.5, 4.8}
   The experimental results demonstrate KCFRC's effectiveness across different scene types and robot morphologies. KCFRC(key) consistently achieves high accuracy levels (99.4-99.8\%) comparable to RRT-Connect(1ms) (99.6-99.9\%) while maintaining excellent recall rates (98.2-98.7\% vs. 98.9-99.2\%), validating KCFRC's theoretical foundation as both methods can identify nearly all reachable footholds with similar reliability. However, the computational efficiency advantage is substantial: KCFRC(key) operates 6-18× faster than RRT-Connect(1ms) across all scenarios (27.4-50.6ms vs. 90.9-587.4ms), while KCFRC(conv) achieves remarkable speed improvements of 100-400× (1.6-2.6ms) with only modest accuracy degradation (97.8-99.7\% accuracy, 93.4-97.8\% recall). In contrast, FEC demonstrates consistent limitations across challenging terrains, achieving only 76.5-90.0\% recall despite perfect precision, confirming its inadequacy for complex environments where predefined trajectories frequently fail.

   Morphology adaptability is confirmed through quadruped validation, where KCFRC maintains high performance (99.4\% accuracy, 98.2\% recall for key variant) while benefiting from reduced kinematic complexity (33.6ms vs. 47.5ms for hexapod dense scenes). These results establish KCFRC(key) as the optimal choice for accuracy-critical applications requiring reliability comparable to exhaustive planners, while KCFRC(conv) serves real-time applications where computational efficiency is paramount, delivering 100-400× speedup over traditional methods while maintaining $>$97\% accuracy.
\end{revisebox}

\revise{3.5}{FEC's poor performance in confined spaces stems from its reliance on predefined swing trajectories that follow simplistic apex-clearing rules. In complex environments (barriers, tunnels), these rigid trajectories frequently collide with obstacles even when alternative feasible paths exist, explaining the 15-20\% recall degradation compared to KCFRC's adaptive path planning approach.}

\subsubsection{Advantages \& Limitations}
KCFRC demonstrates several significant advantages over existing methods:
\begin{itemize}
   \item \textbf{Time Efficiency.} It outperforms all other methods in the time efficiency index, achieving microsecond-scale computation.
   \item \textbf{High Accuracy.} KCFRC(conv) shows only marginal degradation from KCFRC(key) while maintaining $>$97\% accuracy.
   \item \textbf{Complex Environment Adaptation.} Although confined environments seriously affect FEC accuracy, KCFRC maintains the same level as planner-based methods \revise{3.5}{through its ability to identify multiple feasible pathways rather than being constrained to predefined trajectories.}
\end{itemize}

\begin{revisebox}{2.1, 4.4}
   As a sufficient condition approach, KCFRC may occasionally classify reachable footholds as unreachable (false negatives) in specific scenarios:
   \begin{itemize}
      \item \textbf{Narrow Configuration Tunnels}: When feasible paths involve very narrow passages that guiding surface smoothing cannot adequately represent.
      \item \textbf{Multi-Modal Solutions}: In scenarios with multiple disconnected feasible regions, alternative pathways requiring significant configuration space detours may be missed.
      \item \textbf{Guiding Surface Constraints}: Generation failures for subtle terrain features in highly irregular or fractal-like obstacle configurations.
      \item \textbf{Resolution Dependencies}: Conservative margin settings may classify trajectories as infeasible at discretized domain boundaries where narrow gaps theoretically allow passage.
   \end{itemize}
   These limitations result in false negative rates of 1-6\% depending on scene complexity, robot platform and variant choice while maintaining zero false positives for safety-critical applications.
\end{revisebox}


%% file: figures/6_experiments/rc_cmp_table_v2.tex

\resizebox{\columnwidth}{!}{%
  \begin{tabular}{@{}lrrccc@{}}
    \toprule
    Method                                                                      &
    \multicolumn{1}{c}{\begin{tabular}[c]{@{}c@{}}Ave.\\ Time(ms)\end{tabular}} &
    \multicolumn{1}{c}{\begin{tabular}[c]{@{}c@{}}Max.\\ Time(ms)\end{tabular}} &
    \begin{tabular}[c]{@{}c@{}}Ave.\\ Accuracy\end{tabular}                     &
    \begin{tabular}[c]{@{}c@{}}Ave.\\ Precision\end{tabular}                    &
    \begin{tabular}[c]{@{}c@{}}Ave.\\ Recall\end{tabular}                         \\ \midrule
    \multicolumn{6}{l}{\textbf{Hexapod Confined Planning Scenes}}                 \\
    {\color[HTML]{656565} Ground Truth}                                         &
    {\color[HTML]{656565} 827.852}                                              &
    {\color[HTML]{656565} 18941.200}                                            &
    {\color[HTML]{656565} 1.000}                                                &
    {\color[HTML]{656565} 1.000}                                                &
    {\color[HTML]{656565} 1.000}                                                  \\
    KCFRC(key)                                                                  &
    50.574                                                                      &
    78.740                                                                      &
    {\color[HTML]{000000} \textbf{0.996}}                                       &
    0.998                                                                       &
    {\color[HTML]{000000} \textbf{0.986}}                                         \\
    KCFRC(conv)                                                                 &
    {\color[HTML]{F56B00} \textbf{2.580}}                                       &
    {\color[HTML]{F56B00} \textbf{4.070}}                                       &
    0.993                                                                       &
    \textbf{0.999}                                                              &
    0.974                                                                         \\
    RRT-C(1ms)                                                                  &
    303.257                                                                     &
    1085.290                                                                    &
    {\color[HTML]{F56B00} \textbf{0.998}}                                       &
    {\color[HTML]{F56B00} \textbf{1.000}}                                       &
    {\color[HTML]{F56B00} \textbf{0.992}}                                         \\
    RRT-C(50us)                                                                 &
    209.600                                                                     &
    948.055                                                                     &
    0.972                                                                       &
    {\color[HTML]{F56B00} \textbf{1.000}}                                       &
    0.889                                                                         \\
    STOMP                                                                       &
    796.161                                                                     &
    5099.620                                                                    &
    0.976                                                                       &
    \textbf{0.999}                                                              &
    0.908                                                                         \\
    FEC                                                                         &
    \textbf{10.226}                                                             &
    \textbf{36.722}                                                             &
    0.948                                                                       &
    {\color[HTML]{F56B00} \textbf{1.000}}                                       &
    0.796                                                                         \\ \midrule
    \multicolumn{6}{l}{\textbf{Hexapod Dense Planning Scenes}}                    \\
    {\color[HTML]{656565} Ground Truth}                                         &
    {\color[HTML]{656565} 458.751}                                              &
    {\color[HTML]{656565} 5193.540}                                             &
    {\color[HTML]{656565} 1.000}                                                &
    {\color[HTML]{656565} 1.000}                                                &
    {\color[HTML]{656565} 1.000}                                                  \\
    KCFRC(key)                                                                  &
    47.467                                                                      &
    83.766                                                                      &
    {\color[HTML]{000000} \textbf{0.996}}                                       &
    {\color[HTML]{F56B00} \textbf{1.000}}                                       &
    {\color[HTML]{000000} \textbf{0.987}}                                         \\
    KCFRC(conv)                                                                 &
    {\color[HTML]{F56B00} \textbf{2.384}}                                       &
    {\color[HTML]{F56B00} \textbf{5.295}}                                       &
    0.994                                                                       &
    {\color[HTML]{F56B00} \textbf{1.000}}                                       &
    0.977                                                                         \\
    RRT-C(1ms)                                                                  &
    181.670                                                                     &
    460.035                                                                     &
    {\color[HTML]{F56B00} \textbf{0.998}}                                       &
    {\color[HTML]{F56B00} \textbf{1.000}}                                       &
    {\color[HTML]{F56B00} \textbf{0.990}}                                         \\
    RRT-C(50us)                                                                 &
    90.465                                                                      &
    333.077                                                                     &
    0.972                                                                       &
    {\color[HTML]{F56B00} \textbf{1.000}}                                       &
    0.892                                                                         \\
    STOMP                                                                       &
    309.474                                                                     &
    1891.580                                                                    &
    0.992                                                                       &
    {\color[HTML]{F56B00} \textbf{1.000}}                                       &
    0.968                                                                         \\
    FEC                                                                         &
    \textbf{9.494}                                                              &
    \textbf{35.968}                                                             &
    0.943                                                                       &
    {\color[HTML]{F56B00} \textbf{1.000}}                                       &
    0.782                                                                         \\ \midrule
    \multicolumn{6}{l}{\textbf{Quadruped Dense Planning Scenes}}                  \\
    {\color[HTML]{656565} Ground Truth}                                         &
    {\color[HTML]{656565} 654.656}                                              &
    {\color[HTML]{656565} 7022.720}                                             &
    {\color[HTML]{656565} 1.000}                                                &
    {\color[HTML]{656565} 1.000}                                                &
    {\color[HTML]{656565} 1.000}                                                  \\
    KCFRC(key)                                                                  &
    33.559                                                                      &
    58.345                                                                      &
    \textbf{0.994}                                                              &
    {\color[HTML]{F56B00} \textbf{1.000}}                                       &
    \textbf{0.982}                                                                \\
    KCFRC(conv)                                                                 &
    {\color[HTML]{F56B00} \textbf{1.909}}                                       &
    {\color[HTML]{F56B00} \textbf{4.536}}                                       &
    0.978                                                                       &
    {\color[HTML]{F56B00} \textbf{1.000}}                                       &
    0.934                                                                         \\
    RRT-C(1ms)                                                                  &
    587.400                                                                     &
    2004.230                                                                    &
    {\color[HTML]{F56B00} \textbf{0.996}}                                       &
    {\color[HTML]{F56B00} \textbf{1.000}}                                       &
    {\color[HTML]{F56B00} \textbf{0.989}}                                         \\
    RRT-C(50us)                                                                 &
    470.874                                                                     &
    1706.380                                                                    &
    0.962                                                                       &
    {\color[HTML]{F56B00} \textbf{1.000}}                                       &
    0.889                                                                         \\
    STOMP                                                                       &
    966.210                                                                     &
    4507.900                                                                    &
    {\color[HTML]{000000} 0.985}                                                &
    {\color[HTML]{F56B00} \textbf{1.000}}                                       &
    {\color[HTML]{000000} 0.955}                                                  \\
    FEC                                                                         &
    \textbf{5.306}                                                              &
    \textbf{12.778}                                                             &
    0.920                                                                       &
    {\color[HTML]{F56B00} \textbf{1.000}}                                       &
    0.765                                                                         \\ \midrule
    \multicolumn{6}{l}{\textbf{Hexapod Sparse Planning Scenes}}                   \\
    {\color[HTML]{656565} Ground Truth}                                         &
    {\color[HTML]{656565} 103.470}                                              &
    {\color[HTML]{656565} 352.172}                                              &
    {\color[HTML]{656565} 1.000}                                                &
    {\color[HTML]{656565} 1.000}                                                &
    {\color[HTML]{656565} 1.000}                                                  \\
    KCFRC(key)                                                                  &
    27.376                                                                      &
    68.729                                                                      &
    \textbf{0.998}                                                              &
    {\color[HTML]{F56B00} \textbf{1.000}}                                       &
    0.984                                                                         \\
    KCFRC(conv)                                                                 &
    {\color[HTML]{F56B00} \textbf{1.627}}                                       &
    {\color[HTML]{F56B00} \textbf{2.748}}                                       &
    0.997                                                                       &
    {\color[HTML]{F56B00} \textbf{1.000}}                                       &
    0.978                                                                         \\
    RRT-C(1ms)                                                                  &
    90.888                                                                      &
    350.998                                                                     &
    {\color[HTML]{F56B00} \textbf{0.999}}                                       &
    {\color[HTML]{F56B00} \textbf{1.000}}                                       &
    \textbf{0.992}                                                                \\
    RRT-C(50us)                                                                 &
    46.812                                                                      &
    167.921                                                                     &
    0.984                                                                       &
    {\color[HTML]{F56B00} \textbf{1.000}}                                       &
    0.898                                                                         \\
    STOMP                                                                       &
    74.021                                                                      &
    543.723                                                                     &
    {\color[HTML]{F56B00} \textbf{0.999}}                                       &
    {\color[HTML]{F56B00} \textbf{1.000}}                                       &
    {\color[HTML]{F56B00} \textbf{0.994}}                                         \\
    FEC                                                                         &
    \textbf{6.070}                                                              &
    \textbf{14.660}                                                             &
    0.984                                                                       &
    {\color[HTML]{F56B00} \textbf{1.000}}                                       &
    0.900                                                                         \\ \bottomrule
  \end{tabular}%
}

%% file: sections/6_applications.tex
\section{Applications of KCFRC}
\label{section:applications}

We applied the KCFRC algorithm to the MCTS contact planning pipeline\cite{2022Xu_MCTS} based on ElSpider 4 Air hexapod robot.
As illustrated in Fig. \ref{fig:exp_barrier}(a), the ElSpider 4 Air features six legs, providing a total of 18 degrees of freedom (DOF). The robot has a body length of approximately 0.9m, a standing width of about 0.8m, a body standing height of about 0.24m, and a total assembly weight of 30kg.
For this work, we utilized OptiTrack to acquire robot odometry and an RS Bpearl LiDAR for elevation mapping.

\subsection{KCFRC in Legged Robot Planning}

KCFRC can be used for foothold filtration during contact planning (Fig. \ref{fig:1_reachability_example}). In MCTS contact state planning algorithm, candidate robot states are spawned with respect to robot reachability constraints. KCFRC can be applied to filter out unreachable footholds, eliminating contact states that cause inevitable collisions.

\revise{2.2, 3.1}{Beyond contact planning, KCFRC's computational efficiency enables integration with contemporary control frameworks. For \textbf{MPC integration}, KCFRC can serve as a reference generator, producing kinematically feasible swing trajectories as soft tracking constraints in whole-body optimization. For \textbf{RL integration}, KCFRC enhances learning through reward shaping, where feasible trajectories encourage collision-free swing patterns. Our method complements existing frameworks by providing preprocessing filters that reduce computational burden for subsequent planning stages.}

\begin{figure}
    \centering
    \begin{revisebox_nobreak}{3.3}
        \includegraphics[width=1\linewidth]{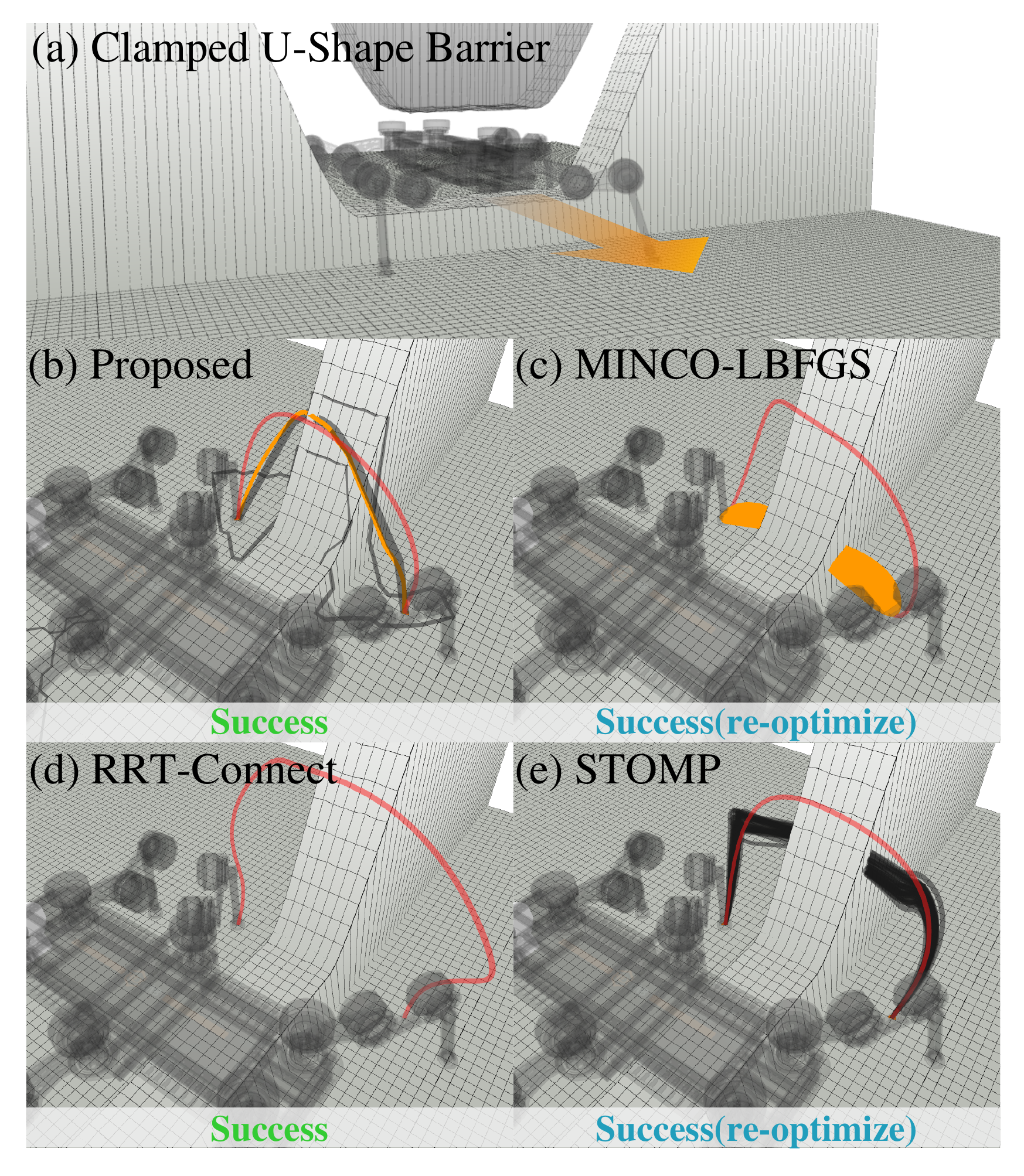}
        \caption{Comparing our proposed method with other methods in a U-shape clamped barrier planning scene considering both knee and foot collision, the final result trajectory is drawn in red. (b) The proposed planner returns a smooth swing trajectory in one shot. (c) MINCO-LBFGS optimizer returns success after many retries. (d) RRT-Connect finds a feasible swing trajectory with low quality even after interpolation. (e) STOMP returns success after many retries.}
        \label{fig:confined_space_cmpl}
    \end{revisebox_nobreak}
\end{figure}

\begin{figure*}[t!]
    \centering
    \includegraphics[width=0.97\textwidth]{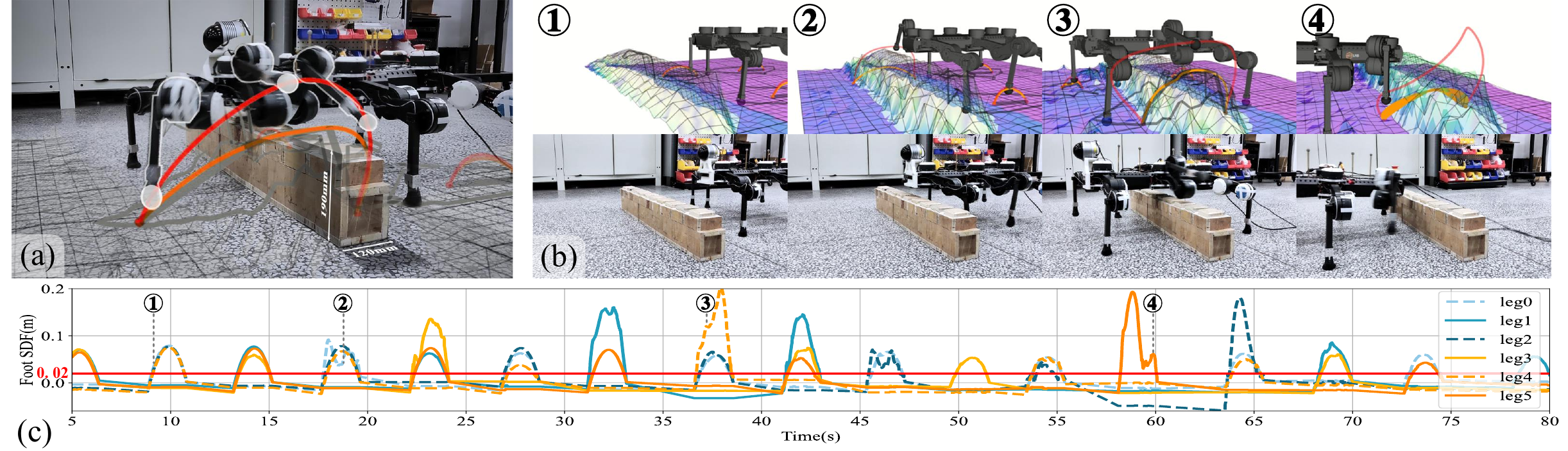}
    \caption{ElSpider 4 Air climbs across a barrier. (a) The robot is getting across a barrier 0.12m in width and 0.19m in height. (b) Snapshots of the robot and Rviz visualization. (c) Foot SDF curves with time of the snapshots annotated. The SDF value of each foot during the swing phase is no less than 0.02m, claiming no unexpected collision.}
    \label{fig:exp_barrier}
\end{figure*}

\subsection{Trajectory Initialization for Optimization Acceleration}
\label{section:trajopt_acc}

\begin{table}[]
    \centering
    \caption{Comparison of swing trajectory planning performance}
    \label{table:planning_cmp}
    \input{figures/6_experiments/table_planners_cmp_v2}
\end{table}

In KCFRC, visibility graph is established for reachability check. Further, a poly trajectory can be initialized from it using popular search algorithms like AStar. The initialized trajectory helps a lot in accelerating the optimization process and alleviate the local minima dilemma.

To demonstrate the capability of KCFRC, we benchmark \textbf{MINCO-LBFGS with KCFRC (proposed)} against \textbf{MINCO-LBFGS}\cite{2022Wang_GCOPTER}, \textbf{RRT-Connect}\cite{2000Kuffner_RRTConnect}, and \textbf{STOMP}\cite{2011Kalakrishnan_STOMP} planners in the scenarios mentioned in the previous section.
We record consumption time, trajectory length and its cost (Acc. square integration in 1s normalized time).
Results are listed in Table. \ref{table:planning_cmp}.

As the statistics show, our method is faster and steadier than any other method, and the planned trajectories take lower costs on average. However, the average length is not the shortest, because it is relevant to lift-off speed and the obstacles.
Although RRT-Connect succeeds in all test cases, some of the trajectories are illegal because the time frames may not be monotone increasing, thus we count it as a failure. This is a fatal problem when deploying RRT-based planners to a space-time optimization problem.

\begin{revisebox}{3.3}
    \subsection{Swing Trajectory Planning in Confined Space}
    We test swing trajectory planners mentioned above in a U-shaped clamped barrier planning scene (Fig. \ref{fig:confined_space_cmpl}(a)). Both knee and foot collisions are considered, and the planning problem is constructed in configuration space.
    For the specific planning problem shown in Fig. \ref{fig:confined_space_cmpl}, the proposed method converges rapidly from the initialized trajectory (Fig. \ref{fig:confined_space_cmpl}(b)), while other gradient-based optimizers like MINCO-LBFGS and STOMP stuck in local minima for many times (Fig. \ref{fig:confined_space_cmpl}(b-c)). Though RRT-Connect finds a feasible trajectory, it is not optimal.
    Apparently, our proposed method generates appropriate initial trajectories so that the local-minima problem can be avoided, and it takes a shorter time to optimize.
\end{revisebox}



\subsection{Real-world Experiments}

To demonstrate the capability of the KCFRC and MCTS contact planning pipeline, we conducted multiple real-world experiments on various terrains.

\subsubsection{Barrier Climbing}
As shown in Fig. \ref{fig:exp_barrier}(a), we tested the robot climbing across a barrier that is 0.12m in width and 0.19m in height, about 50\% of the leg length. The orange swing trajectory is initialized from KCFRC, and the red one is the optimization result. Fig. \ref{fig:exp_barrier}(b) contains snapshots of both Rviz visualization and the real robot through the whole process. Foot SDF curves are drawn in Fig. \ref{fig:exp_barrier}(c). The SDF value means the shortest distance between the foot and the obstacles. The shortest distance of each foot during swing phases is no less than 0.02m, indicating no collisions.

\subsubsection{Stairs and Timber Piles}
The pipeline is also capable of conquering rough terrains like stairs and timber piles. Fig. \ref{fig:exp_hardware}(c1, c2) shows snapshots of the hexapod climbing up and down stairs. Each step is 0.12m high and 0.35m deep. For the timber piles in Fig. \ref{fig:exp_hardware}(a1, a2), each timber is 0.2m in width and length, with height ranging from 0.1m to 0.4m. Due to the difficulty of capturing the terrain morphology only using one Lidar, we constructed the grid map in advance. KCFRC plays an important role in avoiding leg collisions in rough terrains by offering a visibility map to initialize the swing trajectory.

\subsubsection{Hole Navigation}
\revise{4.5, 4.8}{
    The hexapod successfully climbed through a hole as shown in Fig. \ref{fig:exp_hardware}(b1, b2), demonstrating the algorithm's ability to navigate confined spaces with complex geometric constraints.}

\begin{figure}
    \centering
    \begin{revisebox_nobreak}{4.5, 4.8}
        \includegraphics[width=1.0\linewidth]{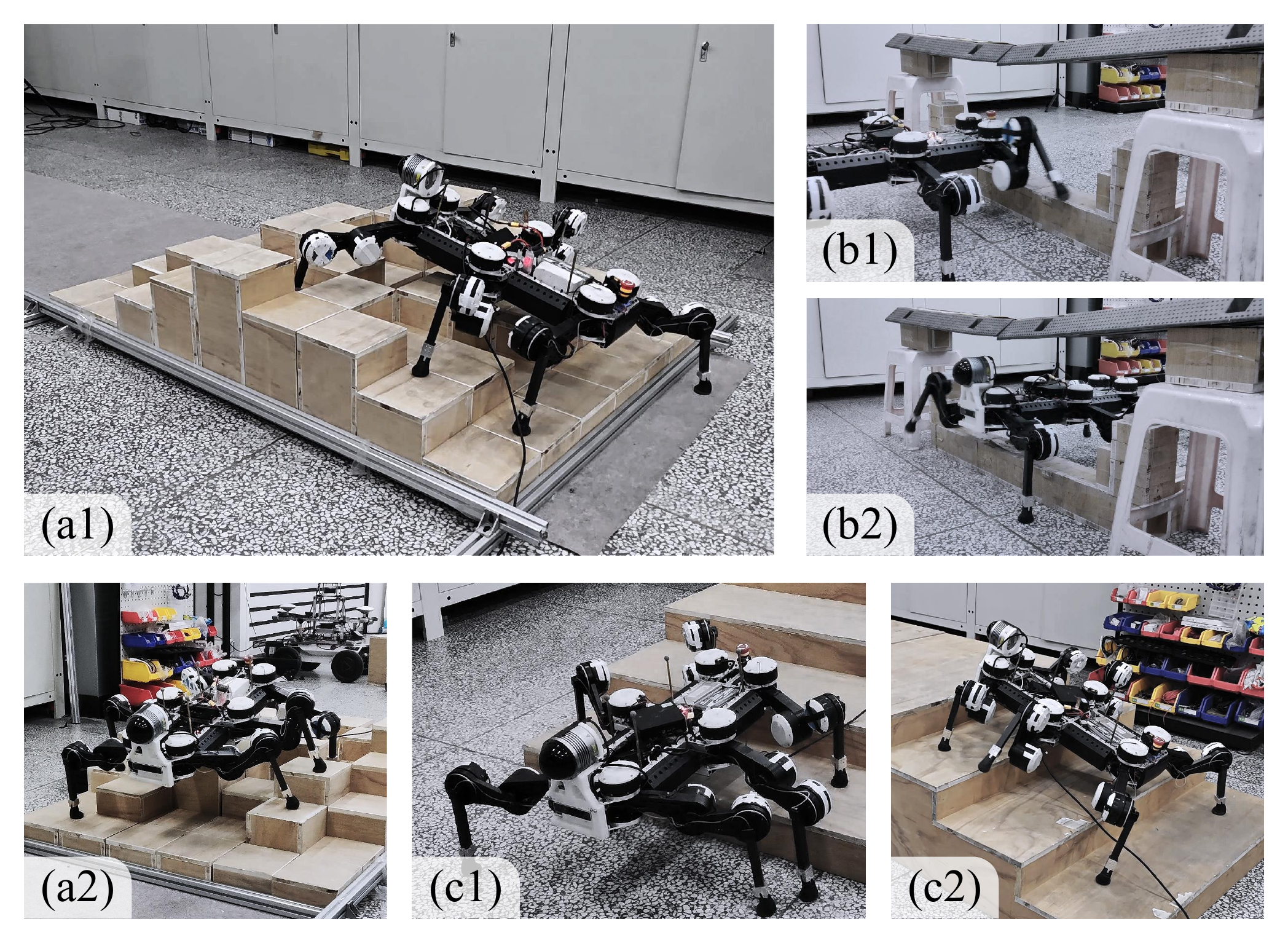}
        \caption{Real-world experimental hardware setups demonstrating KCFRC-equipped hexapod locomotion across diverse terrains. (a1, a2) Timber piles: ElSpider 4 Air navigating across irregular timber obstacles with varying heights from 0.1m to 0.4m. (b1, b2) Hole navigation: Hexapod climbing through confined spaces requiring precise foothold planning and collision avoidance. (c1, c2) Stairs: Robot ascending and descending stair structures with 0.12m step height and 0.35m depth, showcasing multi-level terrain traversal capabilities.}
        \label{fig:exp_hardware}
    \end{revisebox_nobreak}
\end{figure}

%% file: figures/6_experiments/table_planners_cmp_v2.tex
\resizebox{\columnwidth}{!}{%
\begin{tabular}{@{}ccccccc@{}}
\toprule
\multicolumn{2}{c}{\begin{tabular}[c]{@{}c@{}}Planners in\\ Config Space\end{tabular}} &
  \begin{tabular}[c]{@{}c@{}}Time\\ ($ms$)\end{tabular} &
  \begin{tabular}[c]{@{}c@{}}Len.\\ ($rad$)\end{tabular} &
  \begin{tabular}[c]{@{}c@{}}Cost\\ ($s^{-3}$)\end{tabular} &
  \begin{tabular}[c]{@{}c@{}}Succ.\\ Num.\end{tabular} &
  \begin{tabular}[c]{@{}c@{}}Succ.\\ Rate\end{tabular} \\ \midrule
\multirow{3}{*}{\begin{tabular}[c]{@{}c@{}}Proposed\end{tabular}} &
  Mean &
  \textbf{0.818} &
  2.440 &
  \textbf{140.117} &
  \multirow{3}{*}{\textbf{493}} &
  \multirow{3}{*}{\textbf{0.992}} \\
 &
  Max. &
  12.509 &
  8.031 &
  6079.370 &
   &
   \\
 &
  Std. &
  0.871 &
  1.010 &
  365.728 &
   &
   \\ \midrule
\multirow{3}{*}{\begin{tabular}[c]{@{}c@{}}MINCO\\ LBFGS\end{tabular}} &
  Mean &
  1.447 &
  3.296 &
  147.635 &
  \multirow{3}{*}{490} &
  \multirow{3}{*}{0.986} \\
 &
  Max. &
  9.025 &
  5.352 &
  697.719 &
   &
   \\
 &
  Std. &
  0.909 &
  0.647 &
  71.668 &
   &
   \\ \midrule
\multirow{3}{*}{\begin{tabular}[c]{@{}c@{}}RRT\\ Connect\end{tabular}} &
  Mean &
  1.687 &
  3.560 &
  \textgreater{}50000 &
  \multirow{3}{*}{445} &
  \multirow{3}{*}{0.895} \\
 &
  Max. &
  7.158 &
  209.517 &
  \textgreater{}50000 &
   &
   \\
 &
  Std. &
  0.819 &
  11.733 &
  \textgreater{}50000 &
   &
   \\ \midrule
\multirow{3}{*}{STOMP} &
  Mean &
  1.537 &
  \textbf{1.585} &
  408.935 &
  \multirow{3}{*}{485} &
  \multirow{3}{*}{0.976} \\
 &
  Max. &
  30.570 &
  4.397 &
  5281.180 &
   &
   \\
 &
  Std. &
  3.651 &
  0.692 &
  543.638 &
   &
   \\ \bottomrule
\end{tabular}%
}

%% file: sections/7_conclusion.tex
\section{Conclusion}
\label{section:conclusion}


 




This paper addresses the Foothold Reachability Problem for legged robot locomotion in confined environments. We establish a sufficient condition for foothold reachability and implement the Kinematic Collision-Aware Foothold Reachability Check (KCFRC) algorithm, achieving notable computational efficiency. Integration with MCTS planning on the ElSpider Air 4 hexapod demonstrates effective navigation over rough terrains, while trajectory optimization acceleration is achieved through visibility graph initialization.

The proposed method has limitations, particularly regarding grid map resolution accuracy. Since intersection borders are computed on discrete grids, reachability results may miss some feasible points near borders that fall slightly outside grid boundaries. Future improvements include incorporating time dimensions into reachability checks, refining intersection border search for complex connectivity, and deploying the approach on more dynamic controllers.

%% file: library.bib
@misc{2023Balakumar_curobo_report,
  title         = {cuRobo: Parallelized Collision-Free Minimum-Jerk Robot Motion Generation},
  author        = {Balakumar Sundaralingam and Siva Kumar Sastry Hari and Adam Fishman and Caelan Garrett
                   and Karl Van Wyk and Valts Blukis and Alexander Millane and Helen Oleynikova and Ankur Handa
                   and Fabio Ramos and Nathan Ratliff and Dieter Fox},
  year          = {2023},
  eprint        = {2310.17274},
  archiveprefix = {arXiv},
  primaryclass  = {cs.RO}
}

@inproceedings{2019carpentier_pinocchio,
  title     = {The {{Pinocchio C}}++ Library -- {{A}} Fast and Flexible Implementation of Rigid Body Dynamics Algorithms and Their Analytical Derivatives},
  booktitle = {{{IEEE}} International Symposium on System Integrations ({{SII}})},
  author    = {Carpentier, Justin and Saurel, Guilhem and Buondonno, Gabriele and Mirabel, Joseph and Lamiraux, Florent and Stasse, Olivier and Mansard, Nicolas},
  year      = {2019},
  doi       = {10.1109/SII.2019.8700380}
}

@inproceedings{2017Li,
  title     = {Foothold Selection for Quadruped Robot Based on Learning from Expert},
  booktitle = {2017 2nd {{International Conference}} on {{Advanced Robotics}} and {{Mechatronics}} ({{ICARM}})},
  author    = {Li, Xingdong and Li, Jian and Guo, Yanling},
  year      = {2017},
  month     = aug,
  pages     = {223--228},
  doi       = {10.1109/ICARM.2017.8273164},
  urldate   = {2025-01-23}
}

@inproceedings{2010Belter,
  title     = {Rough Terrain Mapping and Classification for Foothold Selection in a Walking Robot},
  booktitle = {2010 {{IEEE Safety Security}} and {{Rescue Robotics}}},
  author    = {Belter, Dominik and Skrzypczy{\'n}ski, Piotr},
  year      = {2010},
  month     = jul,
  pages     = {1--6},
  issn      = {2374-3247},
  doi       = {10.1109/SSRR.2010.5981552},
  urldate   = {2025-01-23}
}

@inproceedings{2019Griffin_Footstep_humanoid,
  title     = {Footstep {{Planning}} for {{Autonomous Walking Over Rough Terrain}}},
  booktitle = {2019 {{IEEE-RAS}} 19th {{International Conference}} on {{Humanoid Robots}} ({{Humanoids}})},
  author    = {Griffin, Robert J. and Wiedebach, Georg and McCrory, Stephen and Bertrand, Sylvain and Lee, Inho and Pratt, Jerry},
  year      = {2019},
  month     = oct,
  pages     = {9--16},
  issn      = {2164-0580},
  doi       = {10.1109/Humanoids43949.2019.9035046},
  urldate   = {2025-01-20}
}

@article{2011Zucker_chomp_littledog,
  title      = {Optimization and Learning for Rough Terrain Legged Locomotion},
  shorttitle = {{{CHOMP-LittleDog}}},
  author     = {Zucker, Matt and Ratliff, Nathan and Stolle, Martin and Chestnutt, Joel and Bagnell, J Andrew and Atkeson, Christopher G and Kuffner, James},
  year       = {2011},
  month      = feb,
  journal    = {The International Journal of Robotics Research},
  volume     = {30},
  number     = {2},
  pages      = {175--191},
  issn       = {0278-3649, 1741-3176},
  doi        = {10.1177/0278364910392608},
  urldate    = {2023-10-17},
  langid     = {english}
}

@inproceedings{2011Kalakrishnan_STOMP,
  title      = {{{STOMP}}: {{Stochastic}} Trajectory Optimization for Motion Planning},
  shorttitle = {{{STOMP}}},
  booktitle  = {2011 {{IEEE International Conference}} on {{Robotics}} and {{Automation}}},
  author     = {Kalakrishnan, Mrinal and Chitta, Sachin and Theodorou, Evangelos and Pastor, Peter and Schaal, Stefan},
  year       = {2011},
  month      = may,
  pages      = {4569--4574},
  issn       = {1050-4729},
  doi        = {10.1109/ICRA.2011.5980280},
  urldate    = {2024-08-17}
}

@inproceedings{2010Kalakrishnan,
  title     = {Fast, Robust Quadruped Locomotion over Challenging Terrain},
  booktitle = {2010 {{IEEE International Conference}} on {{Robotics}} and {{Automation}}},
  author    = {Kalakrishnan, Mrinal and Buchli, Jonas and Pastor, Peter and Mistry, Michael and Schaal, Stefan},
  year      = {2010},
  month     = may,
  pages     = {2665--2670},
  issn      = {1050-4729},
  doi       = {10.1109/ROBOT.2010.5509805},
  urldate   = {2025-01-23}
}

@inproceedings{2008Kolter_LittleDog,
  title     = {A Control Architecture for Quadruped Locomotion over Rough Terrain},
  booktitle = {2008 {{IEEE International Conference}} on {{Robotics}} and {{Automation}}},
  author    = {Kolter, J. Zico and Rodgers, Mike P. and Ng, Andrew Y.},
  year      = {2008},
  month     = may,
  pages     = {811--818},
  publisher = {IEEE},
  address   = {Pasadena, CA, USA},
  doi       = {10.1109/ROBOT.2008.4543305},
  urldate   = {2025-01-23},
  isbn      = {978-1-4244-1646-2},
  langid    = {english}
}

@article{2023Marcucci_GCS,
  title      = {Motion Planning around Obstacles with Convex Optimization},
  shorttitle = {Convex {{Set}} for {{MP}}},
  author     = {Marcucci, Tobia and Petersen, Mark and {von Wrangel}, David and Tedrake, Russ},
  year       = {2023},
  month      = nov,
  journal    = {Science Robotics},
  volume     = {8},
  number     = {84},
  pages      = {eadf7843},
  publisher  = {American Association for the Advancement of Science},
  doi        = {10.1126/scirobotics.adf7843},
  urldate    = {2023-11-24}
}

@misc{2023Petersen_GCS_gen,
  title         = {Growing {{Convex Collision-Free Regions}} in {{Configuration Space}} Using {{Nonlinear Programming}}},
  author        = {Petersen, Mark and Tedrake, Russ},
  year          = {2023},
  month         = mar,
  number        = {arXiv:2303.14737},
  eprint        = {2303.14737},
  primaryclass  = {cs},
  publisher     = {arXiv},
  doi           = {10.48550/arXiv.2303.14737},
  urldate       = {2023-12-06},
  archiveprefix = {arXiv}
}

@misc{2021Akinola_Dynamic_Grasp_with_Reachability,
  title         = {Dynamic {{Grasping}} with {{Reachability}} and {{Motion Awareness}}},
  author        = {Akinola, Iretiayo and Xu, Jingxi and Song, Shuran and Allen, Peter K.},
  year          = {2021},
  month         = mar,
  number        = {arXiv:2103.10562},
  eprint        = {2103.10562},
  primaryclass  = {cs},
  publisher     = {arXiv},
  doi           = {10.48550/arXiv.2103.10562},
  urldate       = {2025-01-21},
  archiveprefix = {arXiv}
}

@article{2020Jenelten_online_foothold_optimization,
  title    = {Perceptive {{Locomotion}} in {{Rough Terrain}} -- {{Online Foothold Optimization}}},
  author   = {Jenelten, Fabian and Miki, Takahiro and Vijayan, Aravind E and Bjelonic, Marko and Hutter, Marco},
  year     = {2020},
  month    = oct,
  journal  = {IEEE Robotics and Automation Letters},
  volume   = {5},
  number   = {4},
  pages    = {5370--5376},
  issn     = {2377-3766},
  doi      = {10.1109/LRA.2020.3007427},
  urldate  = {2025-01-20}
}

@misc{2019Villarreal_Foothold_Adaptation_CNN,
  title         = {Fast and {{Continuous Foothold Adaptation}} for {{Dynamic Locomotion}} through {{CNNs}}},
  author        = {Villarreal, Octavio and Barasuol, Victor and Camurri, Marco and Franceschi, Luca and Focchi, Michele and Pontil, Massimiliano and Caldwell, Darwin G. and Semini, Claudio},
  year          = {2019},
  month         = feb,
  number        = {arXiv:1809.09759},
  eprint        = {1809.09759},
  publisher     = {arXiv},
  doi           = {10.48550/arXiv.1809.09759},
  urldate       = {2024-11-22},
  archiveprefix = {arXiv}
}

@article{2018Tonneau_Acyclic_Contact_Planner,
  title     = {An {{Efficient Acyclic Contact Planner}} for {{Multiped Robots}}},
  author    = {Tonneau, Steve and Del Prete, Andrea and Pettre, Julien and Park, Chonhyon and Manocha, Dinesh and Mansard, Nicolas},
  year      = {2018},
  month     = jun,
  journal   = {IEEE Transactions on Robotics},
  volume    = {34},
  number    = {3},
  pages     = {586--601},
  issn      = {1552-3098, 1941-0468},
  doi       = {10.1109/TRO.2018.2819658},
  urldate   = {2024-06-28},
  copyright = {https://ieeexplore.ieee.org/Xplorehelp/downloads/license-information/IEEE.html},
  langid    = {english}
}

@inproceedings{2007Zacharias_Workspace_reachability,
  title      = {Capturing Robot Workspace Structure: Representing Robot Capabilities},
  shorttitle = {Capturing Robot Workspace Structure},
  booktitle  = {2007 {{IEEE}}/{{RSJ International Conference}} on {{Intelligent Robots}} and {{Systems}}},
  author     = {Zacharias, Franziska and Borst, Christoph and Hirzinger, Gerd},
  year       = {2007},
  month      = oct,
  pages      = {3229--3236},
  issn       = {2153-0866},
  doi        = {10.1109/IROS.2007.4399105},
  urldate    = {2025-01-20}
}

@article{2013Papadakis_Survey_Traversability,
  title      = {Terrain Traversability Analysis Methods for Unmanned Ground Vehicles: {{A}} Survey},
  shorttitle = {Terrain Traversability Analysis Methods for Unmanned Ground Vehicles},
  author     = {Papadakis, Panagiotis},
  year       = {2013},
  month      = apr,
  journal    = {Engineering Applications of Artificial Intelligence},
  volume     = {26},
  number     = {4},
  pages      = {1373--1385},
  issn       = {0952-1976},
  doi        = {10.1016/j.engappai.2013.01.006},
  urldate    = {2025-01-20}
}

@article{2022Borges_Survey_on_Traversability,
  title      = {A {{Survey}} on {{Terrain Traversability Analysis}} for {{Autonomous Ground Vehicles}}: {{Methods}}, {{Sensors}}, and {{Challenges}}},
  shorttitle = {A {{Survey}} on {{Terrain Traversability Analysis}} for {{Autonomous Ground Vehicles}}},
  author     = {Borges, Paulo and Peynot, Thierry and Liang, Sisi and Arain, Bilal and Wildie, Matthew and Minareci, Melih and Lichman, Serge and Samvedi, Garima and Sa, Inkyu and Hudson, Nicolas and Milford, Michael and Moghadam, Peyman and Corke, Peter},
  year       = {2022},
  month      = mar,
  journal    = {Field Robotics},
  volume     = {2},
  number     = {1},
  pages      = {1567--1627},
  issn       = {27713989},
  doi        = {10.55417/fr.2022049},
  urldate    = {2025-01-20},
  langid     = {english}
}

@article{2021Buchanan,
  title      = {Perceptive Whole-Body Planning for Multilegged Robots in Confined Spaces},
  shorttitle = {{{ANYmal}} Confined Space},
  author     = {Buchanan, Russell and Wellhausen, Lorenz and Bjelonic, Marko and Bandyopadhyay, Tirthankar and Kottege, Navinda and Hutter, Marco},
  year       = {2021},
  journal    = {Journal of Field Robotics},
  volume     = {38},
  number     = {1},
  pages      = {68--84},
  issn       = {1556-4967},
  doi        = {10.1002/rob.21974},
  urldate    = {2023-10-07},
  copyright  = {{\copyright} 2020 The Authors. Journal of Field Robotics published by Wiley Periodicals LLC},
  langid     = {english}
}

@misc{2024Miki_walk_in_confined_space,
  title         = {Learning to Walk in Confined Spaces Using {{3D}} Representation},
  author        = {Miki, Takahiro and Lee, Joonho and Wellhausen, Lorenz and Hutter, Marco},
  year          = {2024},
  month         = feb,
  number        = {arXiv:2403.00187},
  eprint        = {2403.00187},
  publisher     = {arXiv},
  doi           = {10.48550/arXiv.2403.00187},
  urldate       = {2024-11-19},
  archiveprefix = {arXiv}
}

@article{2022Xu_MCTS,
  title      = {Contact {{Sequence Planning}} for {{Hexapod Robots}} in {{Sparse Foothold Environment Based}} on {{Monte-Carlo Tree}}},
  shorttitle = {{{MCTS}}},
  author     = {Xu, Peng and Ding, Liang and Wang, Zhikai and Gao, Haibo and Zhou, Ruyi and Gong, Zhaopei and Liu, Guangjun},
  year       = {2022},
  month      = apr,
  journal    = {IEEE Robotics and Automation Letters},
  volume     = {7},
  number     = {2},
  pages      = {826--833},
  issn       = {2377-3766},
  doi        = {10.1109/LRA.2021.3133610},
  urldate    = {2024-08-26}
}

@article{1996Barber_QuickHull,
  title    = {The Quickhull Algorithm for Convex Hulls},
  author   = {Barber, C. Bradford and Dobkin, David P. and Huhdanpaa, Hannu},
  year     = {1996},
  month    = dec,
  journal  = {ACM Trans. Math. Softw.},
  volume   = {22},
  number   = {4},
  pages    = {469--483},
  issn     = {0098-3500},
  doi      = {10.1145/235815.235821},
  urldate  = {2024-08-24}
}

@phdthesis{2010Diankov_IKFast,
  title      = {Automated {{Construction}} of {{Robotic Manipulation Programs}}},
  shorttitle = {{{IKFast}}},
  author     = {Diankov, Rosen},
  year       = {2010},
  month      = aug,
  langid     = {english},
  school     = {CMU}
}

@inproceedings{2000Kuffner_RRTConnect,
  title      = {{{RRT-connect}}: {{An}} Efficient Approach to Single-Query Path Planning},
  shorttitle = {{{RRT-connect}}},
  booktitle  = {Proceedings 2000 {{ICRA}}. {{Millennium Conference}}. {{IEEE International Conference}} on {{Robotics}} and {{Automation}}. {{Symposia Proceedings}} ({{Cat}}. {{No}}.{{00CH37065}})},
  author     = {Kuffner, J.J. and LaValle, S.M.},
  year       = {2000},
  month      = apr,
  volume     = {2},
  pages      = {995-1001 vol.2},
  issn       = {1050-4729},
  doi        = {10.1109/ROBOT.2000.844730},
  urldate    = {2023-10-16}
}

@article{2022Wang_GCOPTER,
  title      = {Geometrically {{Constrained Trajectory Optimization}} for {{Multicopters}}},
  shorttitle = {{{GCOPTER}}},
  author     = {Wang, Zhepei and Zhou, Xin and Xu, Chao and Gao, Fei},
  year       = {2022},
  month      = oct,
  journal    = {IEEE Transactions on Robotics},
  volume     = {38},
  number     = {5},
  pages      = {3259--3278},
  issn       = {1941-0468},
  doi        = {10.1109/TRO.2022.3160022},
  urldate    = {2023-10-11}
}

@incollection{2016Fankhauser_GridMap,
  title      = {A {{Universal Grid Map Library}}: {{Implementation}} and {{Use Case}} for {{Rough Terrain Navigation}}},
  shorttitle = {Grid {{Map}}},
  booktitle  = {In: {{Robot Operating System}} ({{ROS}})},
  author     = {Fankhauser, P{\'e}ter and Hutter, Marco},
  year       = {2016},
  month      = jan,
  volume     = {625},
  doi        = {10.1007/978-3-319-26054-9_5},
  isbn       = {978-3-319-26052-5}
}

@article{2014Schulman_TrajOpt,
  title      = {Motion Planning with Sequential Convex Optimization and Convex Collision Checking},
  shorttitle = {{{TrajOpt}}},
  author     = {Schulman, John and Duan, Yan and Ho, Jonathan and Lee, Alex and Awwal, Ibrahim and Bradlow, Henry and Pan, Jia and Patil, Sachin and Goldberg, Ken and Abbeel, Pieter},
  year       = {2014},
  month      = aug,
  journal    = {The International Journal of Robotics Research},
  volume     = {33},
  number     = {9},
  pages      = {1251--1270},
  issn       = {0278-3649, 1741-3176},
  doi        = {10.1177/0278364914528132},
  urldate    = {2023-10-16}
}

@article{1998LaValle_RRT,
  title      = {Rapidly-Exploring Random Trees : A New Tool for Path Planning},
  shorttitle = {{{RRT}}},
  author     = {LaValle, S.},
  year       = {1998},
  journal    = {The annual research report},
  urldate    = {2023-10-16}
}

@article{1996Kavraki_PRM,
  title      = {Probabilistic Roadmaps for Path Planning in High-Dimensional Configuration Spaces},
  shorttitle = {{{PRM}}},
  author     = {Kavraki, L.E. and Svestka, P. and Latombe, J.-C. and Overmars, M.H.},
  year       = {Aug./1996},
  journal    = {IEEE Transactions on Robotics and Automation},
  volume     = {12},
  number     = {4},
  pages      = {566--580},
  issn       = {1042296X},
  doi        = {10.1109/70.508439},
  urldate    = {2023-10-16}
}

@article{2019Buchanan_Weaver,
  title      = {Walking {{Posture Adaptation}} for {{Legged Robot Navigation}} in {{Confined Spaces}}},
  shorttitle = {Weaver},
  author     = {Buchanan, Russell and Bandyopadhyay, Tirthankar and Bjelonic, Marko and Wellhausen, Lorenz and Hutter, Marco and Kottege, Navinda},
  year       = {2019},
  month      = apr,
  journal    = {IEEE Robotics and Automation Letters},
  volume     = {4},
  number     = {2},
  pages      = {2148--2155},
  issn       = {2377-3766},
  doi        = {10.1109/LRA.2019.2899664},
  urldate    = {2023-09-25}
}

@article{2013zucker_CHOMP,
  title      = {{{CHOMP}}: {{Covariant Hamiltonian}} Optimization for Motion Planning},
  shorttitle = {{{CHOMP}}},
  author     = {Zucker, Matt and Ratliff, Nathan and Dragan, Anca D. and Pivtoraiko, Mihail and Klingensmith, Matthew and Dellin, Christopher M. and Bagnell, J. Andrew and Srinivasa, Siddhartha S.},
  year       = {2013},
  month      = aug,
  journal    = {International Journal of Robotics Research},
  volume     = {32},
  number     = {9-10},
  pages      = {1164--1193},
  issn       = {0278-3649},
  doi        = {10.1177/0278364913488805},
  urldate    = {2023-10-07}
}

@article{2023fahmi_ViTAL,
  title      = {{{ViTAL}}: {{Vision-Based Terrain-Aware Locomotion}} for {{Legged Robots}}},
  shorttitle = {{{ViTAL}}},
  author     = {Fahmi, Shamel and Barasuol, Victor and Esteban, Domingo and Villarreal, Octavio and Semini, Claudio},
  year       = {2023},
  month      = apr,
  journal    = {IEEE Transactions on Robotics},
  volume     = {39},
  number     = {2},
  pages      = {885--904},
  issn       = {1941-0468},
  doi        = {10.1109/TRO.2022.3222958},
  urldate    = {2023-09-25}
}

@misc{2022grandia_pnmpc,
  title        = {Perceptive {{Locomotion}} through {{Nonlinear Model Predictive Control}}},
  shorttitle   = {Perceptive {{NMPC}}},
  author       = {Grandia, Ruben and Jenelten, Fabian and Yang, Shaohui and Farshidian, Farbod and Hutter, Marco},
  year         = {2022},
  month        = aug,
  number       = {arXiv:2208.08373},
  eprint       = {2208.08373},
  primaryclass = {cs},
  publisher    = {arXiv},
  doi          = {10.48550/arXiv.2208.08373},
  urldate      = {2023-08-02}
}

@inproceedings{2018fankhauser,
  title      = {Robust {{Rough-Terrain Locomotion}} with a {{Quadrupedal Robot}}},
  shorttitle = {{{ANYmal Terrain-Loco}}},
  booktitle  = {2018 {{IEEE International Conference}} on {{Robotics}} and {{Automation}} ({{ICRA}})},
  author     = {Fankhauser, Peter and Bjelonic, Marko and Dario Bellicoso, C. and Miki, Takahiro and Hutter, Marco},
  year       = {2018},
  month      = may,
  pages      = {5761--5768},
  publisher  = {IEEE},
  address    = {Brisbane, QLD},
  doi        = {10.1109/ICRA.2018.8460731},
  urldate    = {2023-09-27}
}
